\def\BState{\State\hskip-\ALG@thistlm}
\newtheorem*{theorem*}{Theorem}
\newtheorem{theorem}{Theorem}[section]
\newtheorem{lemma}[theorem]{Lemma}
\newtheorem{claim}[theorem]{Claim}
\newtheorem{assumption}[theorem]{Assumption}
\newtheorem{remark}[theorem]{Remark}
\newtheorem{example}[theorem]{Example}
\newtheorem{definition}[theorem]{Definition}
\newcommand{\E}{\mathbb{E}}
\newcommand\numberthis{\addtocounter{equation}{1}\tag{\theequation}}
\definecolor{cyan2}{HTML}{1B8988}
\newcommand\gs[1]{{}}
\newcommand\yk[1]{{}}
\title{Water from Two Rocks: Maximizing the Mutual Information}
\title{Water from Two Rocks: Maximizing the Mutual Information} 
\author{Yuqing Kong\\ University of Michigan \and Grant Schoenebeck\\ University of Michigan}
\date{}
\begin{document}

%


\maketitle

\begin{abstract}
We build a natural connection between the learning problem, co-training, and forecast elicitation without verification (related to peer-prediction)
 and address them simultaneously using the same information theoretic approach.\footnote{This work is supported by the National Science Foundation, under grant CAREER\#1452915, CCF\#1618187 and AitF\#1535912.}

In co-training/multiview learning~\citep{blum1998combining} the goal is to aggregate two views of data into a prediction for a latent label.  We show how to optimally combine two views of  data by reducing the problem to an optimization problem.  
Our work gives a unified and rigorous approach to the general setting.   

In forecast elicitation without verification we seek to design a mechanism that elicits high quality forecasts from agents  in the setting where the mechanism does not have access to the ground truth.  By assuming the agents' information is independent conditioning on the outcome, we propose mechanisms where truth-telling is a strict equilibrium for  both the single-task and multi-task settings.  Our multi-task mechanism additionally has the property that the truth-telling equilibrium pays better than any other strategy profile and strictly better than any other ``non-permutation" strategy profile when the prior satisfies some mild conditions.

\end{abstract}

\section{Introduction}\label{sec:intro}

Co-training/multiview learning is a problem that asks to aggregate two views of data into a prediction for the latent label, and was first proposed by \citet{blum1998combining}. Although co-training is an important learning problem, it lacks a unified and rigorous approach to the general setting. The current paper will make an innovative connection between the co-training problem and a peer prediction style mechanism design problem: forecast elicitation without verification, and develop a unified theory for both of them via the same information theoretic approach. 

We use ``forecasting whether a startup company will succeed'' as our running example. We have two possible sources of information for each startup: the features $X_A$ (e.g. products, business idea, target customer) of the startup; and the survey feedback $X_B$, collected from the crowd (e.g.\ a survey of amateur investors). Sometimes we have access to both the sources, and sometimes we have access to only one of the sources. We want to learn how to forecast the result $Y$ (succeed/fail) of a startup company, using both or one of the sources.

We are given a set predictor candidates $\{P_A\}$ (e.g. a set of hypotheses) such that each predictor candidate $P_A$ maps the features $X_A$ to a forecast for the result $Y$ of the startup (e.g. succeed with 73\% probability, fail with 27\% probability). We are also given a set predictor candidates $\{P_B\}$ (e.g. a set of aggregation algorithms like majority vote/weighted average) such that each predictor candidate $P_B$ maps the survey feedback $X_B$ to a forecast for the result $Y$. Our goal is to evaluate the performance of a specific pair $P_A,P_B$. The learning problem, learning how to forecast, can be reduced to this goal since if we know how to evaluate the two candidates $P_A,P_B$'s performance, we can select the two candidates $P_A^*,P_B^*$ which have the highest performance and use them to forecast.

Given a batch of past startup data each with the features $X_A$, the crowdsourced feedback $X_B$, and the result $Y$, we can evaluate the performance of the predictors through many existing measurements (e.g. proper scoring rules, loss functions). This evaluation method is related to the supervised learning setting. However, there may be only very few data points about the startups with results $Y$.\footnote{For example, if we focus on cryptographic or self-driving currencies, there are very few startups labeled with results.} When we only use a few labeled data points to train the predictor, the predictor will likely over-fit. Thus, we can boldly ask:

(*Learning) \emph{Can we evaluate the performance of the predictor candidates, as well as learn how to forecast the ground truth $Y$, without access to any data labeled with $Y$?} (See Figure~\ref{fig:peer1})
{\setlength\intextsep{0pt}
\begin{figure}[h]
\centering
\includegraphics[width=0.7\linewidth]{./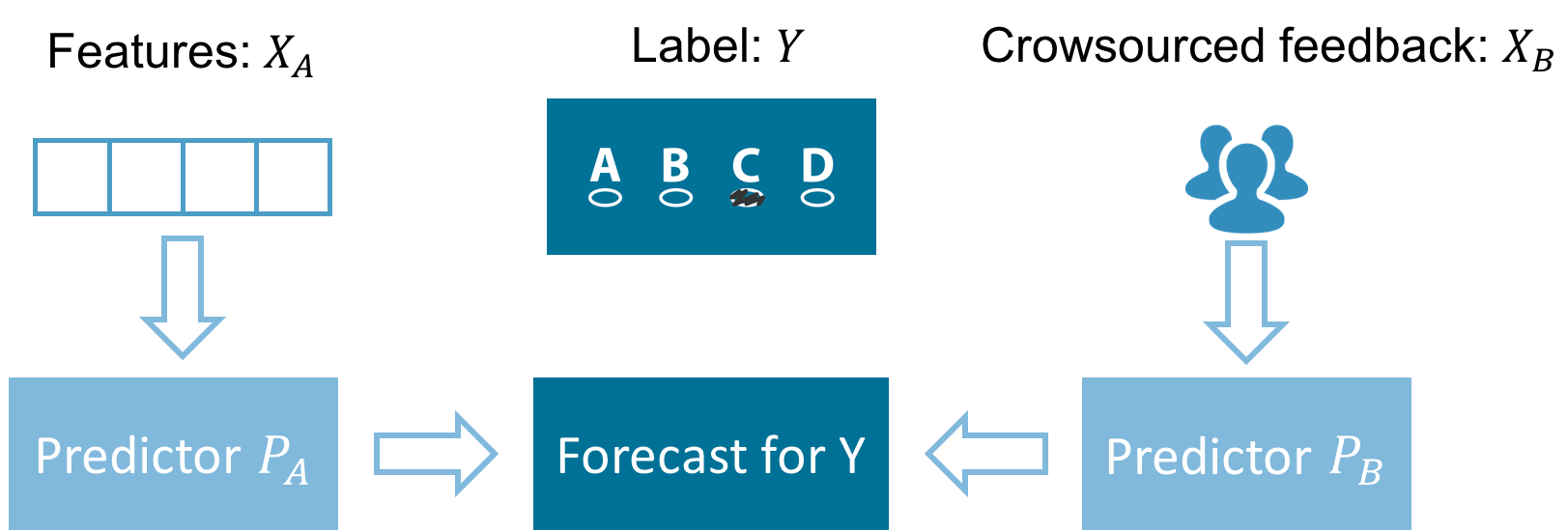}
\caption{Problem (*): Finding the common ground truth}
\label{fig:peer1}
\end{figure}}

It is impossible to solve this problem without making an additional assumption on the relationship between $X_A,X_B$ and $Y$. However, it turns out we can solve this problem with a natural assumption, conditioning on $Y$, $X_A$ and $X_B$ are independent.  This assumption states that $Y$ contains all common information between $X_A$ and $X_B$ (see Section~\ref{sec:model} for more discussion).

With this assumption, a naive approach is to learn the joint distribution of $X_A$ and $X_B$ using the past data, and then solve the relationship between $Y$ and $X_A,X_B$ by some calculations, using the fact that $X_A$ and $X_B$ are independent conditioning on $Y$. However, this naive approach will not work if either $X_A$ or $X_B$ has very high dimension. We will address this issue using learning methods. Before we go further on the learning problem, let's consider a corresponding mechanism design problem. In the scenario where the forecasts are provided by human beings, we want to ask a mechanism design problem:

(**Mechanism design) \emph{Can we design proper \emph{instant} reward schemes to incentivize high quality forecast for $Y$ without instant access to $Y$?} (See Figure~\ref{fig:peer})
{\setlength\intextsep{0pt}
\begin{figure}[t]
\centering
\includegraphics[width=0.7\linewidth]{./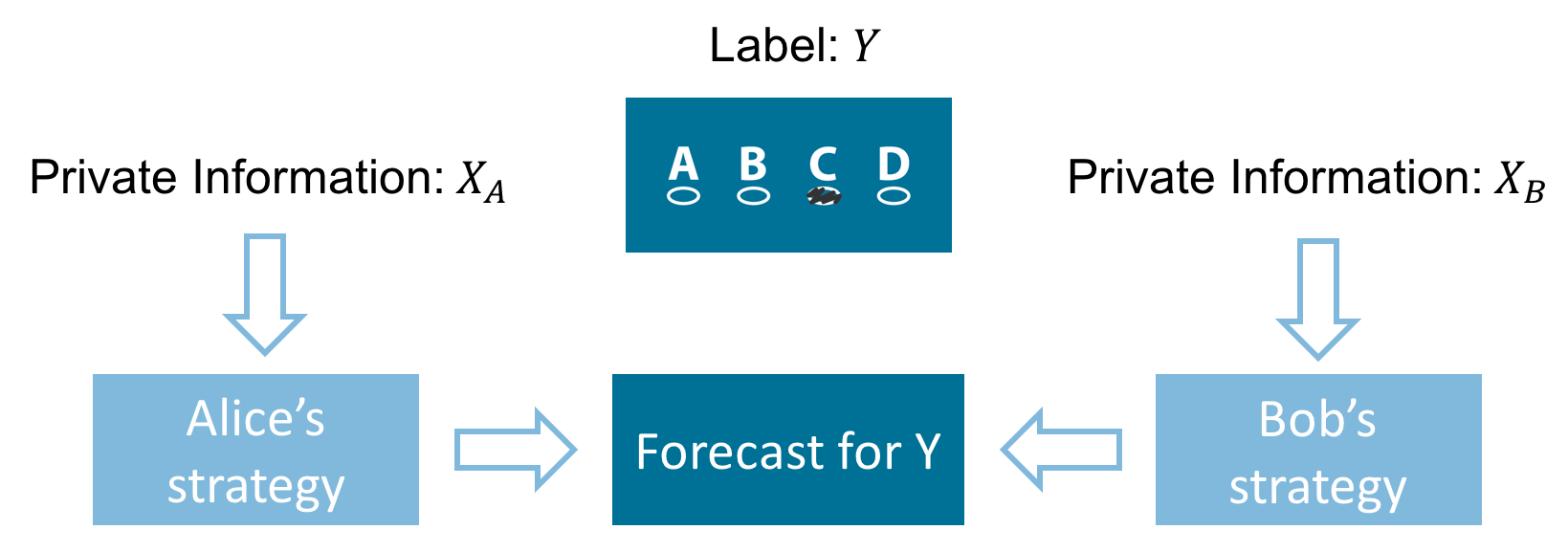}
\caption{Problem (**): Forecast elicitation}
\label{fig:peer}
\end{figure} }

People will obtain instant payments from \emph{instant} reward schemes. If we do not require the reward schemes to be instant, proper scoring rules will work by rewarding people in the future after $Y$ is revealed. It turns out the above learning problem (*) and mechanism design problem (**) are essentially the same, since there is a natural correspondence between an evaluation of their performance and their rewards. The mechanism design applications still require the conditional independent assumption. To address the two problems, a first try would be rewarding the predictors according to their ``agreement'', since high quality predictors should have a lot of agreement with each other. However, if we train the predictors based on this criterion, then the output of the training process will be two meaningless constant predictors which perfectly agree with each other (e.g. always forecast 100\% success). We call this problem the ``naive agreement'' issue.

Note that the mechanism design problem (**) is closely related to the peer prediction literature, incentivizing high quality information reports without verification. It is natural to leverage the techniques and insights from peer prediction to address problems (*) and (**). In fact, the peer prediction literature provides an information theoretic idea to address the ``naive agreement'' issue, that is, replacing ``agreement'' by mutual information. In the current paper, we will show that with a natural assumption, conditioning on $Y$, $X_A$, and $X_B$ are independent, we can address problem (*) and (**) simultaneously via rewarding the predictors the mutual information between them and using the predictors' reward as the evaluation of their performance.

\paragraph{Our contribution} We build a natural connection between mechanism design and machine learning by simultaneously addressing a learning problem and a mechanism design problem in the context where ground truth is unknown, via the same information theoretic approach. 

\begin{description}
    \item [Learning] We focus on the co-training problem~\cite{blum1998combining}: learning how to forecast $Y$ using two sources of information $X_A$ and $X_B$, without access to any data labeled with ground truth $Y$ (Section~\ref{sec:model}). By making a typical assumption in the co-training literature, conditioning on $Y$, $X_A$ and $X_B$ are independent, we reduce the learning problem to an optimization problem $\max_{P_A,P_B}MIG^f(P_A,P_B)$ such that solving the learning problem is equivalent to picking the $P_A^*,P_B^*$ that maximize $MIG^f(P_A,P_B)$, i.e., the $f$-mutual information gain between $P_A$ and $P_B$ (Section~\ref{sec:commontruth}). Formally, we define \emph{the Bayesian posterior predictor} as the predictor that maps any input information $X=x$ to its Bayesian posterior forecast for $Y=y$, i.e., $Pr(Y=y|X=x)$. Then when both $P_A,P_B$ are Bayesian posterior predictors, $MIG^f(P_A,P_B)$ is maximized and the maximal value is the $f$-mutual information between $X_A$ and $X_B$. With an additional mild restriction on the prior, $MIG^f(P_A,P_B)$ is maximized if and only if both $P_A,P_B$ are permuted versions of the Bayesian posterior predictor. 
    
    We also design another family of optimization goals, \emph{$PS$-gain}\footnote{$PS$ is a proper scoring rule.}, based on the family of proper scoring rules (Section~\ref{sec:psgain}). We can also reduce the learning problem to the $PS$-gain optimization problem. We will show a special case of the $PS$-gain, picking $PS$ as the logarithmic scoring rule $LSR$, corresponds to the maximum likelihood estimator method. The range of applications of $PS$-gain is more limited when compared with the range of applications of the $f$-mutual information gain, since the application of $PS$-gain requires either one of the information sources to be low dimensional or that we have a simple generative model for the distribution over one of the information sources and ground truth labels, while the $f$-mutual information gain does not have these restrictions.
    
    As is typical in related literature, we do not investigate the computation complexity or data requirement of the learning problem. 
    
    \emph{To the best of our knowledge, this is the first optimization goal in the co-training literature that guarantees that the maximizer corresponds to the Bayesian posterior predictor, without any additional assumption}. Thus, our method optimally aggregates the two sources of information.  
    
    \vspace{5pt}
    
    \item [Mechanism design] Consider the scenario where 
    we elicit forecasts for ground truth $Y$ from agents and pay agents immediately. Without access to $Y$, given the prior on the distribution of $Y$, i.e., $Pr[Y]$, \footnote{This is not a very strong assumption since we do not need the knowledge of the joint distribution over the event and agents' private information.} by assuming agents' private information are independent conditioning on $Y$ and the prior satisfies some mild conditions, in the single-task setting (there is only a single forecasting task), we design a \emph{strictly truthful} mechanism, the \emph{common ground mechanism}, where truth-telling is a strict equilibrium (Section~\ref{sec:single}); in the multi-task (there are at least two a priori similar forecasting tasks) setting, we design a family of \emph{focal} mechanisms, the \emph{multi-task common ground mechanism $MCG(f)$s}, where the truth-telling equilibrium pays better than any other strategy profile and \emph{strictly} higher than any non-permutation strategy profile (Section~\ref{sec:multi}). 
\end{description}


\paragraph{Technical contribution}Our main technical ingredient is a novel performance measurement, the \emph{$f$-mutual information gain}, which is an unbiased estimator of the $f$-mutual information. To give a flavor of this measurement, we give an informal presentation here: both $P_A$ and $P_B$ are assigned a batch of forecasting tasks, the $f$-mutual information gain between $P_A$ and $P_B$ is
\begin{align*}
    &\text{The agreements between $P_A$'s forecast and $P_B$'s forecast for the same task} \\
    &- f^{\star}(\text{The agreements between $P_A$'s forecast and $P_B$'s forecast for different tasks})
\end{align*}

{\setlength\intextsep{5pt}
\begin{figure}[htp]
\centering
\includegraphics[width=0.7\linewidth]{./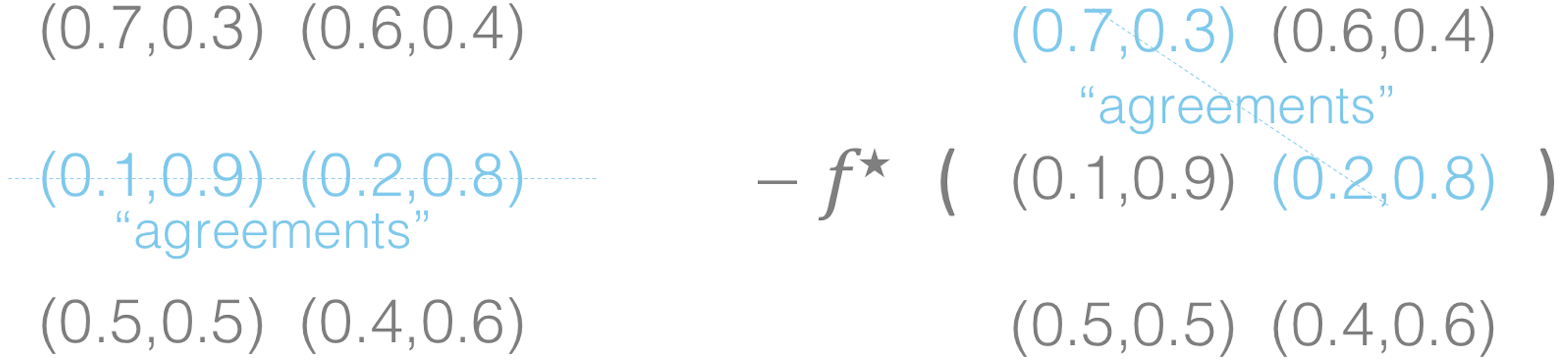}
\caption{An unbiased estimator of $f$-mutual information: $f$-mutual information gain. $P_A$ and $P_B$ are assigned three forecasting tasks. $P_A$'s outputs are $(0.7,0.3),(0.1,0.9),(0.5,0.5)$ and $P_B$'s outputs are $(0.6,0.4),(0.2,0.8),(0.4,0.6)$. To calculate the $f$-mutual information gain between them, we pick a task (e.g. Task no.\ 2) uniformly at random and calculate the ``agreement" $a_s$ between $P_A$ and $P_B$'s forecasts for this task; we also pick a pair of distinct tasks $(i,j)$ uniformly at random (e.g. (Task no.\ 1, Task no.\ 2)) and calculate the ``agreement" $a_d$ between $P_A$'s forecast for task $i$  and $P_B$'s forecast for this task $j$. The $f$-mutual information gain is then $a_s-f^{\star}(a_d)$. The formal definition (Section~\ref{sec:fgain}) actually uses the  empirical expectations of $a_s$ and $f^{\star}(a_d)$.}
\label{fig:fgain}
\end{figure} 
}

where $f^{\star}$ is the conjugate of the convex function $f$. With this measurement, two agreeing constant predictors have small gain since their outputs have large agreements for both the same task and different tasks. The formal definition will be introduced in Section~\ref{sec:fgain} and the agreement measure is introduced in Definition~\ref{def:agree}. 

The $f$-mutual information gain is conceptually similar to the correlation payment scheme proposed by \citet{dasgupta2013crowdsourced} (in the binary choice setting), and \citet{2016arXiv160303151S} (in the multiple choice setting), which pays agents ``the agreement for the same task \emph{minus} the agreement for the distinct task''. In \citet{dasgupta2013crowdsourced} and \citet{2016arXiv160303151S}, the payment scheme is designed for discrete signals and the measure of agreements is a simple indicator function. \citet{2016arXiv160501021K} show that this correlation payment is related to a special $f$-mutual information. Thus, the $f$-mutual information gain can be seen as an extension of the correlation payment scheme that works for forecast reports.

\subsection{Applications}\label{sec:application}

In our startup running example, we consider the situation where one source of information is the features and another source of information is the crowdsourced feedback. In fact, our results apply to all kinds of information sources. For example, we can make both sources features or crowdsourced feedback. Different setups for the information sources and predictor candidates can bring different applications of our results. 

Let's consider the ``learning with noisy labels'' problem where the labels in the training data are a noisy version of the ground truth labels $Y$ and the noise is independent. We can map this problem into our framework by letting $X_B$ be the noisy label of features $X_A$. That is, $X_B$ is a noisy version of $Y$. Our framework guarantees that the Bayesian posterior predictor that forecasts $Y$ using $X_A$ must be part of a maximizer of the optimization problem. However, there are many other maximizers. For example, since $X_A$ and $X_B$ are independent conditioning $X_B$.  The Bayesian posterior predictor that forecasts $X_B$ using $X_A$ is also part of a maximizer, since the scenario $Y=X_B$ also satisfies the conditional independence assumption. If $X_B$ has much higher dimension than $Y$, we do not have this issue. But $X_B$ has the same signal space with $Y$ in the learning with noisy label problem. Thus, it's impossible to eliminate other maximizers without any side information here. With some side information (e.g. a candidate set $\mathcal{F}$, like linear regressions, that only contains our desired maximizer.), it's possible to obtain the Bayesian posterior predictor that forecasts $Y$ using $X_A$. Note that our framework does not require a pre-estimation on the transition probability that transits the ground truth label $Y$ to the noisy ground truth label $X_B$, since our framework has this transition probability, which corresponds to the predictor $P_B$, as parameters as well and learns the correct forecaster $P_A$ and the transition probability $P_B$ simultaneously.

\citet{ratner2016data} propose a method to collect massive labels by asking the crowds to write heuristics to label the instances. Each instance is associated with many noisy labels outputted by the heuristics. In their setting, the crowds use a different source of information from the learning algorithm (e.g. the learning algorithm uses the biology description of the genes and the crowds use the scientific papers about the gene). Thus, the conditional independence assumption is natural here and we can map this setting's training problem into our framework. \citet{ratner2016data} preprocess the collected labels to approximate ground truth by assuming a particular information structure model on the crowds. Our framework is model-free and does not need to preprocess the collected labels since we can learn the best forecaster (predictor $P_A$) and the best processing/aggregation algorithm (predictor $P_B$) simultaneously.  

Moreover, since the highest evaluation value of the predictors $P_A,P_B$ is the $f$-mutual information between $X_A$ and $X_B$, our results provide a method to calculate the $f$-mutual information between any two sources of information $X_A,X_B$ of any format. \citet{2016arXiv160501021K} propose a framework for designing information elicitation mechanisms that reward truth-telling by paying each agent the $f$-mutual information between her report and her peers' report. Thus, the $f$-mutual information gain method can be combined with this framework to design information elicitation mechanisms when the information has a complicated format.


\subsection{Related work}

\paragraph{Learning}

Co-training/multiview learning was first proposed by \citet{blum1998combining} and explored by many works (e.g. \citet{dasgupta2002pac,collins1999unsupervised}). \citet{xu2013survey, li2016multi} give surveys on this literature.  
Although co-training is an important learning problem, it lacks a unified theory and a solid theoretic guarantee for the general model. Most traditional co-training methods require additional restrictions on the hypothesis space (e.g. weakly good hypotheses) to address the ``naive agreement'' issue and  fail to deal with soft hypotheses.  Soft hypotheses output a continuous signal (as opposed to hard hypothesis which output a discrete signal) and are typically required to fully aggregate the information from two sources. \citet{becker1996mutual} deals with a feature learning problem which is very similar to the co-training problem. \citet{becker1996mutual} seeks to maximize the Shannon mutual information between the output of two functions. However, their work only considers hard (not soft) hypotheses and lacks a solid theoretic analysis for the maximizer. \citet{kakade2007multi} consider the multi-view regression and maximize the correlation between the two hypotheses. Their method captures the ``mutual information'' idea (in fact, correlation is a special $f$-mutual information \cite{2016arXiv160501021K}) but their model has a very specific set up and the analysis cannot be extended to other co-training problems.


In contrast, we propose a simple, powerful and general information theoretic framework, $f$-mutual information gain, that has a solid theoretic guarantee, works for soft hypothesis and addresses the ``naive agreement'' issue without any additional assumption. 


\citet{natarajan2013learning}, \citet{sukhbaatar2014learning} and many other works (e.g. \citet{angluin1988learning,khardon2007noise,scott2013classification}) consider the learning with noisy labels problem. \citet{natarajan2013learning} consider binary labels and calibrate the original loss function such that the Bayesian posterior predictor that forecasts ground truth $Y$ is a maximizer of the calibrated loss. \citet{sukhbaatar2014learning} extend this work to the multiclass setting. These works require additional estimation steps to learn the transition probability that transits the ground truth labels to the noisy labels and fix this transition probability in their calibration step. In contrast, by mapping this problem into our framework (Section~\ref{sec:application}), we do not need the additional estimation steps to make the calibrated forecaster part of a maximizer of our optimization problem, and can incorporate any kind of side information to learn the calibrated forecaster and true transition probability simultaneously. 

Moreover, our results can handle more complicated setting where each instance is labeled by multiple labels. Rather than preprocessing the labels by a particular algorithm (e.g. majority vote, weighted average, spectral method) and assuming some information structure model among the crowds \cite{ratner2016data}, our framework is model-free and can learn the best calibrated forecaster (predictor $P_A$) and the best processing algorithm (predictor $P_B$) simultaneously.

\citet{raykar2010learning} also \emph{jointly} learn the calibrated forecaster and the distribution over the crowd-sourced feedback and ground truth labels. \citet{raykar2010learning} uses the maximum likelihood estimator and assumes a simple generative model for the distribution over the crowdsourced feedback and the ground truth labels, which is conditioning the ground truth label, the crowdsourced feedback is drawn from a binomial distribution, while our framework is model-free.  We also extend the maximum likelihood estimator method in \citet{raykar2010learning} to a general family of estimators, $PS$-gain estimators, based on the family of proper scoring rules, which also \emph{jointly} learn the calibrated forecaster and the distribution. We will show the range of applications of $PS$-gain is more limited compared with the range of applications of the $f$-mutual information gain (see Section~\ref{sec:comparison} for more details). \citet{cid2012proper} also uses proper scoring rules to design the loss functions that address the learning with noisy labels problem. However, \citet{cid2012proper} designs a different family of loss functions from the $PS$-gain and cannot jointly learn the calibrated forecaster and the distribution.

Generative Adversarial Networks (GAN) \cite{goodfellow2014generative} combine game theory and learning theory to make innovative progress. We also combine game theory and learning theory by proposing a peer prediction game between two predictors. The game in GAN is a zero-sum competitive game while the game in the current paper is collaborative. 


Several learning problems (e.g. finding the pose of an object in an image \cite{bell1995information}, blind source separation \cite{cardoso1997infomax}, feature selection \cite{peng2005feature}) use mutual information maximization (infomax) as their optimization goal. Some of these problems require data labeled with ground truth and some of them have a very different problem set up than our work.

We borrow the techniques about the duality of $f$-divergence from \citet{nguyen2009surrogate,nguyen2010estimating}. \citet{nguyen2009surrogate} show a correspondence between the $f$-divergence and the surrogate loss in the \emph{binary supervised learning} setting and \citet{nguyen2010estimating} propose a way to estimate the $f$-divergence between two high dimensional random variables. We apply the duality of $f$-divergence to an unsupervised learning problem and not restricted to the binary setting.



We also differ from the crowdsourcing literature that infers ground truth answers from agents' reports (e.g. \cite{zhou2012learning,karger2014budget,zhang2014spectral,dalvi2013aggregating}) in the sense that their agents' reports are a simple choice (e.g. A, B, C, D) while in our setting,  the report can come from a space larger than the space of ground truth answers, perhaps even a very high dimensional vector.

\paragraph{Mechanism design} Our mechanism design setting differ from the traditional peer prediction literature (e.g.\cite{MRZ05,prelec2004bayesian,dasgupta2013crowdsourced,2016arXiv160501021K,2016arXiv160303151S}) since we are eliciting forecast rather than a simple signal. We can discretize the forecast report and apply the traditional peer prediction literature results. However, this will only provide approximated truthfulness and fail to design focal mechanisms which pay truth-telling \emph{strictly} better than any other non-permutation equilibrium since the forecast is discretized, while our mechanisms are focal for $\geq$2 tasks setting. 

\citet{witkowski2017proper} consider the forecast elicitation situation and assume that they have an unbiased estimator of the optimal forecast while we assume an additional conditional independence assumption but do not need the unbiased estimator. 

\citet{Liu:2017:MAP:3033274.3085126,liuchen} connect mechanism design with learning by using the learning methods to design peer prediction mechanisms. In the setting where several agents are asked to label a batch of instances, \citet{Liu:2017:MAP:3033274.3085126} design a peer prediction mechanism where each agent is paid according to her answer and a reference answer generated by a classification algorithm using other agents' reports. \citet{liuchen} also use surrogate loss functions as tools to develop a multi-task mechanism that achieves truthful elicitation in dominant strategy when the mechanism designer only has access to agents' reports. 
Instead of using learning methods to design the peer prediction mechanisms, our work uses peer prediction mechanism design techniques to address a learning problem. Moreover, our mechanism design problem has a very different set up from \citet{Liu:2017:MAP:3033274.3085126, liuchen}. \citet{agarwal2015consistent} connect learning theory with information elicitation by showing the equivalence between the calibrated surrogate losses in \emph{supervised} learning and the elicitation of certain properties of the underlying conditional label distribution. Both our learning problem and mechanism design problem have a very different set up from theirs. 

\paragraph{Independent work} Like the current paper, \citet{DBLP:journals/corr/abs-1802-07572} also uses Shannon mutual information to propose an information theoretic training objective that can deal with soft hypotheses/classifiers.  However, the optimization functions from these two works are different. 
We also use a more general information measure, $f$-mutual information, which has Shannon mutual information as a special case, and provide a formal analysis for this general framework.  Additionally, we propose an innovative connection between co-training and peer prediction.  



\section{Preliminaries}\label{sec:prelim}
Given a finite set $[N]:=\{1,2,...,N\}$, for any function $\phi:[N]\mapsto \mathbb{R}$, we use $(\phi(y))_{y\in[N]}$ to represent the vector $(\phi(1),\phi(2),...,\phi(N))\in \mathbb{R}^N$. Given a finite set $\Sigma$, $\Delta_{\Sigma}$ is the set of all distributions over $\Sigma$.

\subsection{$f$-divergence and Fenchel's duality}

\paragraph{$f$-divergence~\cite{ali1966general,csiszar2004information}} 
$f$-divergence $D_f:\Delta_{\Sigma}\times \Delta_{\Sigma}\mapsto \mathbb{R}$ is a non-symmetric measure of the difference between distribution $\mathbf{p}\in \Delta_{\Sigma} $ and distribution $\mathbf{q}\in \Delta_{\Sigma} $ 
and is defined to be $$D_f(\mathbf{p},\mathbf{q})=\sum_{\sigma\in \Sigma}
\mathbf{p}(\sigma)f\bigg( \frac{\mathbf{q}(\sigma)}{\mathbf{p}(\sigma)}\bigg)$$
where $f:\mathbb{R}\mapsto\mathbb{R}$ is a convex function and $f(1)=0$. 

Here we introduce two $f$-divergences in common use: KL divergence, and Total Variance Distance.
\begin{example}[KL divergence]
Choosing $-\log(x)$ as the convex function $f(x)$, $f$-divergence becomes KL divergence $D_{KL}(\mathbf{p},\mathbf{q})=\sum_{\sigma}\mathbf{p}(\sigma)\log\frac{\mathbf{p}(\sigma)}{\mathbf{q}(\sigma)}$
\end{example}

\begin{example}[Total Variance Distance]
Choosing $|x-1|$ as the convex function $f(x)$, $f$-divergence becomes Total Variance Distance $D_{tvd}(\mathbf{p},\mathbf{q})=\sum_{\sigma}|\mathbf{p}(\sigma)-\mathbf{q}(\sigma)|$
\end{example}

\begin{definition}[Fenchel Duality \cite{rockafellar1966extension}]
Given any function $f:\mathbb{R}\mapsto \mathbb{R}$, we define its convex conjugate $f^{\star}$ as a function that also maps $\mathbb{R}$ to $\mathbb{R}$ such that $$f^{\star}(x)=\sup_{t} tx-f(t).$$
\end{definition}

\begin{lemma}[Dual version of $f$-divergence~\cite{nguyen2009surrogate,nguyen2010estimating}]\label{lemma:dualdivergence}
$$ D_f(\mathbf{p},\mathbf{q}) \geq \sup_{u\in \Sigma} \E_{\mathbf{p}} u- \E_{\mathbf{q}}f^{\star}(u)=\sup_{u\in \mathcal{G}} \sum_{\sigma}u(\sigma) \mathbf{p}(\sigma)- \sum_{\sigma}f^{\star}(u(\sigma))\mathbf{q}(\sigma) $$
where $\mathcal{G}$ is a set of functions that maps $\Sigma$ to $\mathbb{R}$. The equality holds if and only if $u(\sigma)=u^*(\sigma)\in \partial{f}(\frac{\mathbf{p}(\sigma)}{\mathbf{q}(\sigma)})$, i.e., the subdifferential of $f$ on value $\frac{\mathbf{p}(\sigma)}{\mathbf{q}(\sigma)}$.  
\end{lemma}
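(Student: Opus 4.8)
The plan is to derive both the inequality and the equality case from a single elementary fact --- the Fenchel--Young inequality, applied coordinatewise. Since $f:\mathbb{R}\mapsto\mathbb{R}$ is convex and finite everywhere it is continuous, hence closed, so $f^{\star\star}=f$; unwinding the definition of the conjugate, this says exactly that $f(t)+f^{\star}(u)\ge ut$ for all $t,u\in\mathbb{R}$, with equality if and only if $u\in\partial f(t)$. No other ingredient is really needed.

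First I would apply this inequality at the pointwise likelihood ratio. For each $\sigma\in\Sigma$ put $r_{\sigma}:=\mathbf{p}(\sigma)/\mathbf{q}(\sigma)$ and let $u(\sigma)\in\mathbb{R}$ be arbitrary, so that $f(r_{\sigma})+f^{\star}(u(\sigma))\ge u(\sigma)r_{\sigma}$. Multiplying through by $\mathbf{q}(\sigma)\ge 0$ and summing over $\sigma$, the left side becomes $D_f(\mathbf{p},\mathbf{q})+\sum_{\sigma}f^{\star}(u(\sigma))\mathbf{q}(\sigma)$ and the right side becomes $\sum_{\sigma}u(\sigma)\mathbf{p}(\sigma)$; rearranging gives $D_f(\mathbf{p},\mathbf{q})\ge \sum_{\sigma}u(\sigma)\mathbf{p}(\sigma)-\sum_{\sigma}f^{\star}(u(\sigma))\mathbf{q}(\sigma)=\E_{\mathbf{p}}u-\E_{\mathbf{q}}f^{\star}(u)$. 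Taking the supremum over all $u$ (equivalently over $u\in\mathcal{G}$ when $\mathcal{G}$ is the full set of functions $\Sigma\mapsto\mathbb{R}$) yields the claimed bound; the two right-hand expressions in the lemma are literally the same quantity written in expectation versus summation form.

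For the equality clause I would note that each summand in the step above is a nonnegative ``slack'' $\mathbf{q}(\sigma)\bigl(f(r_{\sigma})+f^{\star}(u(\sigma))-u(\sigma)r_{\sigma}\bigr)$, so the aggregated inequality is tight precisely when every slack vanishes, i.e.\ when $u(\sigma)\in\partial f(r_{\sigma})=\partial f(\mathbf{p}(\sigma)/\mathbf{q}(\sigma))$ for every $\sigma$ with $\mathbf{q}(\sigma)>0$ (on coordinates with $\mathbf{q}(\sigma)=0$ the value of $u$ affects neither side). Picking $u^{*}$ with $u^{*}(\sigma)\in\partial f(\mathbf{p}(\sigma)/\mathbf{q}(\sigma))$ --- possible since $f$ is finite everywhere, so $\partial f$ is nonempty at every point --- therefore attains the supremum, and this is exactly the stated optimality condition.

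The only real subtleties, and where I would spend the care, are the degenerate coordinates and the richness of $\mathcal{G}$. One must fix the usual $f$-divergence conventions when $\mathbf{p}(\sigma)=0$ or $\mathbf{q}(\sigma)=0$ (interpreting $r_{\sigma}$ and products like $0\cdot f^{\star}(\cdot)$ via the recession function of $f$), and one should check that although $f^{\star}$ may take the value $+\infty$ off its domain, the supremum is unaffected because the optimal $u^{*}$ stays inside $\mathrm{dom}\,f^{\star}$. The equality is genuinely an ``iff'' only when $\mathcal{G}$ contains a selection of $\partial f(\mathbf{p}(\cdot)/\mathbf{q}(\cdot))$; for a strictly restricted class $\mathcal{G}$ the supremum may be a strict inequality, so the equality statement should be read for $\mathcal{G}=\mathbb{R}^{\Sigma}$. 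Because $\Sigma$ is finite everything else is a finite sum, so there are no measurability or interchange-of-limit issues; the entire argument is that one line of Fenchel--Young plus this bookkeeping.
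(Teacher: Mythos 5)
Your proposal is correct and is exactly the standard argument: the paper does not prove Lemma~\ref{lemma:dualdivergence} itself but imports it from \citet{nguyen2009surrogate,nguyen2010estimating}, whose proof is precisely your pointwise Fenchel--Young inequality $f(t)+f^{\star}(u)\ge ut$ (equality iff $u\in\partial f(t)$) applied at $t=\mathbf{p}(\sigma)/\mathbf{q}(\sigma)$, weighted by $\mathbf{q}(\sigma)$ and summed. Two small remarks: the paper's displayed definition of $D_f$ writes $\sum_\sigma\mathbf{p}(\sigma)f(\mathbf{q}(\sigma)/\mathbf{p}(\sigma))$, whereas the lemma's equality condition $u^*(\sigma)\in\partial f(\mathbf{p}(\sigma)/\mathbf{q}(\sigma))$ matches the convention $\sum_\sigma\mathbf{q}(\sigma)f(\mathbf{p}(\sigma)/\mathbf{q}(\sigma))$ that you (correctly, following the lemma) use, so the discrepancy is in the paper's notation rather than in your proof; and your aside that on coordinates with $\mathbf{q}(\sigma)=0$ the value of $u$ ``affects neither side'' is not quite right when $\mathbf{p}(\sigma)>0$ (there the right side still contains $u(\sigma)\mathbf{p}(\sigma)$ and the usual recession-function convention for $D_f$ is what keeps the inequality valid), but this degenerate case is immaterial for the paper's application, where $\mathbf{q}$ is the product of marginals and has full support wherever $\mathbf{p}$ does.
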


We call $(u^*,f^{\star}(u^*))$ \emph{a pair of best disinguishers}.  This dual version of $f$-divergence is introduced by \citet{nguyen2009surrogate} and also plays a key role in the design of a type of generative adversarial networks, $f$-GANs~\cite{nowozin2016f}. 

\subsection{$f$-mutual information}

Given two random variables $X,Y$ whose realization space are $\Sigma_X$ and $\Sigma_Y$, let $\mathbf{U}_{X,Y}$ and $\mathbf{V}_{X,Y}$ be two probability measures where $\mathbf{U}_{X,Y}$ is the joint distribution of $(X,Y)$ and $\mathbf{V}_{X,Y}$ is the product of the marginal distributions of $X$ and $Y$. Formally, for every pair of $(x,y)\in\Sigma_X\times\Sigma_Y$, $$\mathbf{U}_{X,Y}(X=x,Y=y)=\Pr[X=x,Y=y]\qquad \mathbf{V}_{X,Y}(X=x,Y=y)=\Pr[X=x]\Pr[Y=y].$$ 

If $\mathbf{U}_{X,Y}$ is very different from $\mathbf{V}_{X,Y}$, the mutual information between $X$ and $Y$ should be high since knowing $X$ changes the belief for $Y$ a lot. If $\mathbf{U}_{X,Y}$ equals to $\mathbf{V}_{X,Y}$, the mutual information between $X$ and $Y$ should be zero since $X$ is independent with $Y$. Intuitively, the ``distance'' between $\mathbf{U}_{X,Y}$ and $\mathbf{V}_{X,Y}$ represents the mutual information between them.

\begin{definition}[$f$-mutual information \cite{2016arXiv160501021K}]
The $f$-mutual information between $X$ and $Y$ is defined as $$MI^f(X;Y)=D_f(\mathbf{U}_{X,Y},\mathbf{V}_{X,Y})$$ where $D_f$ is $f$-divergence. $f$-mutual information is always non-negative \cite{2016arXiv160501021K}.
\end{definition}

$f$-mutual information is used in the peer prediction literature since if the information is measured by $f$-mutual information, any ``data processing'' on either of the random variables will decrease the amount of information crossing them. Thus, in peer prediction, if we pay agents according to the $f$-mutual information between her information and her peers' information, agents will be incentivized to report all information to maximize their payments\footnote{In the current paper, we do not directly use the data processing inequality of $f$-mutual information. Thus, we omit the formal introduction here. The interested reader is refer to \citet{2016arXiv160501021K}. }. 

Two examples of $f$-mutual information are Shannon mutual information~\cite{cover2006elements} (Choosing $f$-divergence as KL divergence) and $MI^{tvd}(X;Y):=\sum_{x,y}|\Pr[X=x,Y=y]-\Pr[X=x]\Pr[Y=y]|$ (Choosing $f$-divergence as Total Variation Distance).

We define $K(X=x,Y=y)$ as the ratio between $U_{X,Y}(x,y)$ and $V_{X,Y}(x,y)$, i.e., $$K(X=x,Y=y):=\frac{\Pr[X=x,Y=y]}{\Pr[X=x]\Pr[Y=y]}=\frac{\Pr[Y=y|X=x]}{\Pr[Y=y]}=\frac{\Pr[X=x|Y=y]}{\Pr[X=x]}.$$ $K(X=x,Y=y)$ represents the ``\textbf{pointwise mutual information}(PMI)'' between $X=x$ and $Y=y$. Lemma~\ref{lemma:dualdivergence} directly implies:

\begin{lemma}[Dual version of $f$-mutual information]\label{lemma:duality}
$$ MI^f(X;Y) \geq \sup_{u\in \mathcal{G}} \E_{U_{X,Y}} u- \E_{V_{X,Y}}f^{\star}(u)$$
where $\mathcal{G}$ is a set of functions that maps $\Sigma_X\times \Sigma_Y$ to $\mathbb{R}$. 

The equality holds if and only if $u(x,y)=u^*(x,y)\in \partial{f}(K(X=x,Y=y))$.
\end{lemma}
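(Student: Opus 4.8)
The plan is to derive Lemma~\ref{lemma:duality} as an immediate specialization of Lemma~\ref{lemma:dualdivergence}. Concretely, I would set $\Sigma := \Sigma_X\times\Sigma_Y$ (a finite set, since $\Sigma_X$ and $\Sigma_Y$ are), $\mathbf{p} := \mathbf{U}_{X,Y}$, and $\mathbf{q} := \mathbf{V}_{X,Y}$. The first step is the routine check that the hypotheses of Lemma~\ref{lemma:dualdivergence} hold for this choice: $\mathbf{U}_{X,Y}$ and $\mathbf{V}_{X,Y}$ are bona fide probability distributions over $\Sigma_X\times\Sigma_Y$ by their definitions, and $\mathbf{U}_{X,Y}$ is absolutely continuous with respect to $\mathbf{V}_{X,Y}$, since $\mathbf{V}_{X,Y}(x,y)=\Pr[X=x]\Pr[Y=y]=0$ forces $\Pr[X=x,Y=y]=\mathbf{U}_{X,Y}(x,y)=0$. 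Hence $D_f(\mathbf{U}_{X,Y},\mathbf{V}_{X,Y})$ and the sums appearing in its dual form are well defined, with the convention that any pair $(x,y)$ of zero $\mathbf{V}_{X,Y}$-mass contributes nothing.

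The inequality is then immediate: by the definition of $f$-mutual information, $MI^f(X;Y)=D_f(\mathbf{U}_{X,Y},\mathbf{V}_{X,Y})$, so Lemma~\ref{lemma:dualdivergence} yields $MI^f(X;Y)\ge \sup_{u\in\mathcal{G}}\E_{\mathbf{U}_{X,Y}}u-\E_{\mathbf{V}_{X,Y}}f^{\star}(u)$ for $\mathcal{G}$ any set of functions from $\Sigma_X\times\Sigma_Y$ to $\mathbb{R}$. For the tightness characterization I would just transcribe the equality condition of Lemma~\ref{lemma:dualdivergence}: the bound is attained exactly when $u(\sigma)=u^{*}(\sigma)\in\partial f(\mathbf{p}(\sigma)/\mathbf{q}(\sigma))$, and under the substitution the ratio at $\sigma=(x,y)$ on the support of $\mathbf{V}_{X,Y}$ is $\mathbf{U}_{X,Y}(x,y)/\mathbf{V}_{X,Y}(x,y)=\Pr[X=x,Y=y]/(\Pr[X=x]\Pr[Y=y])=K(X=x,Y=y)$, the pointwise mutual information. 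This gives precisely the stated condition $u(x,y)=u^{*}(x,y)\in\partial f(K(X=x,Y=y))$.

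I do not anticipate a genuine obstacle: the excerpt itself flags this as a direct consequence of Lemma~\ref{lemma:dualdivergence}, so the work is essentially the verification that the substitution is legitimate and that the quantities line up. The only spots that call for a moment of attention are bookkeeping ones --- confirming that the ordering of $\mathbf{U}_{X,Y},\mathbf{V}_{X,Y}$ in the two slots of $D_f$ matches so that the subgradient is evaluated at $\mathbf{U}/\mathbf{V}$ (i.e.\ at $K$) rather than its reciprocal, and handling pairs $(x,y)$ where a marginal vanishes --- and both are settled by the absolute-continuity observation in the first step.
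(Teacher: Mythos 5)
Your proposal is correct and matches the paper's own treatment: the paper gives no separate argument for Lemma~\ref{lemma:duality}, stating only that Lemma~\ref{lemma:dualdivergence} directly implies it via exactly the substitution $\mathbf{p}=\mathbf{U}_{X,Y}$, $\mathbf{q}=\mathbf{V}_{X,Y}$ over $\Sigma_X\times\Sigma_Y$, with the ratio becoming the pointwise mutual information $K$. Your added bookkeeping (absolute continuity and the ordering of the two slots of $D_f$) is fine and only makes the specialization more explicit.
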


\setlength{\textfloatsep}{0pt}
\begin{table}\label{table:distinguishers}
\centering
\begin{tabular}{llll} 
    \toprule
    {$f$-divergence} & {$f(t)$} & {$u^*(x,y)=\partial{f}(K(x,y))$} & {$f^{\star}(u^*(x,y)$)} \\ \midrule
    Total Variation Distance & $|t-1|$  & sign($\log K(x,y)$) & sign($\log K(x,y)$) \\ 
    \midrule
    KL divergence & $t\log t$  & $1+\log K(x,y)$ & $K(x,y)$ \\ 
    \midrule
    Reverse KL & $-\log t$  & $-\frac{1}{K(x,y)}$ & $-1+\log K(x,y)$) \\ 
    \midrule
    Pearson $\chi^2$ & $(t-1)^2$  & $2(K(x,y)-1)$ & $(K(x,y))^2-1$ \\ 
    \midrule
    Squared Hellinger & $(\sqrt{t}-1)^2$  & $1-\sqrt {\frac{1}{K(x,y)}}$ & $\sqrt {K(x,y)}-1$\\ 
    \bottomrule
\end{tabular}
\caption{Reference for common $f$-divergences and corresponding pairs of best distinguishers $(u^*(x,y),f^{\star}(u^*(x,y))$ of $f$-mutual information. $K(x,y)=K(X=x,Y=y)$ (PMI).}
\end{table}

\subsection{Proper scoring rules} 
A scoring rule $PS:  \Sigma \times \Delta_{\Sigma} \mapsto \mathbb{R}$ \cite{winkler1969scoring,gneiting2007strictly} takes in a signal $\sigma \in \Sigma$  and a distribution over signals $\mathbf{p} \in \Delta_{\Sigma}$ and outputs a real number.  A scoring rule is \emph{proper} if, whenever the first input is drawn from a distribution $\mathbf{p}$, then $\mathbf{p}$ will maximize the expectation of $PS$ over all possible inputs in $\Delta_{\Sigma}$ to the second coordinate. A scoring rule is called \emph{strictly proper} if this maximum is unique. We will assume throughout that the scoring rules we use are strictly proper. Slightly abusing notation, we can extend a scoring rule to be $PS:  \Delta_{\Sigma} \times \Delta_{\Sigma} \mapsto \mathbb{R}$  by simply taking $PS(\mathbf{p}, \mathbf{q}) = \E_{\sigma \leftarrow \mathbf{p}}(\sigma,  \mathbf{q})$.  We note that this means that any proper scoring rule is linear in the first term. 



\begin{example}[Log Scoring Rule~\cite{winkler1969scoring,gneiting2007strictly}]\label{eg:lsr}
Fix an outcome space $\Sigma$ for a signal $\sigma$.  Let $\mathbf{q} \in \Delta_{\Sigma}$ be a reported distribution.
The Logarithmic Scoring Rule maps a signal and reported distribution to a payoff as follows:
$$LSR(\sigma,\mathbf{q})=\log (\mathbf{q}(\sigma)).$$

Let the signal $\sigma$ be drawn from some random process with distribution $\mathbf{p} \in \Delta_\Sigma$.

Then the expected payoff of the Logarithmic Scoring Rule
$$ \E_{\sigma \leftarrow \mathbf{p}}[LSR(\sigma,\mathbf{q})]=\sum_{\sigma}\mathbf{p}(\sigma)\log \mathbf{q}(\sigma)=LSR(\mathbf{p},\mathbf{q})$$

This value will be maximized if and only if $\mathbf{q}=\mathbf{p}$.

\end{example}

\subsection{Property of the pointwise mutual information}

We will introduce a simple property of the pointwise mutual information that we will use multiple times in the future. In addition to several different formats of the pointwise mutual information (e.g. joint distribution/product of the marginal distributions, posterior/prior), if there exists a latent random variable $Y$ such that random variable $X_A$ and random variable $X_B$ are independent conditioning on $Y$, we can also represent the pointwise mutual information between $X_A$ and $X_B$ by the ``agreement'' between the ``relationship'' between $X_A$ and $Y$, and the ``relationship'' between $X_B$ and $Y$.

\begin{claim}\label{claim:ci}
When random variables $X_A$, $X_B$ are independent conditioning on $Y$, 
\begin{align*}
    K(X_A=x_A,X_B=x_B)
    =&\sum_y {\Pr[Y=y]}K(X_A=x_A,Y=y) K(X_B=x_B,Y=y)\\
    =&\sum_y \Pr[Y=y|X_A=x_A] K(X_B=x_B,Y=y)\\
    =&\sum_y \frac{\Pr[Y=y|X_A=x_A]\Pr[Y=y|X_B=x_B]}{\Pr[Y=y]}.
\end{align*}
\end{claim}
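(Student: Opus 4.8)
The plan is to verify the three displayed equalities in sequence, using only the definition of pointwise mutual information and the conditional independence hypothesis. Throughout, write $p(y) = \Pr[Y=y]$, and recall from the preliminaries that $K(X_A=x_A,Y=y) = \frac{\Pr[X_A=x_A,Y=y]}{\Pr[X_A=x_A]\Pr[Y=y]} = \frac{\Pr[Y=y\mid X_A=x_A]}{\Pr[Y=y]}$, and similarly for $X_B$.

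First I would establish the top equality. Start from the definition $K(X_A=x_A,X_B=x_B) = \frac{\Pr[X_A=x_A,X_B=x_B]}{\Pr[X_A=x_A]\Pr[X_B=x_B]}$. Expand the numerator by the law of total probability and then apply conditional independence: $\Pr[X_A=x_A,X_B=x_B] = \sum_y \Pr[X_A=x_A,X_B=x_B\mid Y=y]\Pr[Y=y] = \sum_y \Pr[X_A=x_A\mid Y=y]\Pr[X_B=x_B\mid Y=y]\Pr[Y=y]$. Dividing through by $\Pr[X_A=x_A]\Pr[X_B=x_B]$ and distributing the sum, each term becomes $\Pr[Y=y]\cdot\frac{\Pr[X_A=x_A\mid Y=y]}{\Pr[X_A=x_A]}\cdot\frac{\Pr[X_B=x_B\mid Y=y]}{\Pr[X_B=x_B]}$, which is exactly $\Pr[Y=y]\,K(X_A=x_A,Y=y)\,K(X_B=x_B,Y=y)$ by the alternative ``posterior/prior ratio'' form of $K$ recorded earlier in the excerpt. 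That gives line one.

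Next, for the second equality I would simply regroup: $\Pr[Y=y]\,K(X_A=x_A,Y=y) = \Pr[Y=y]\cdot\frac{\Pr[Y=y\mid X_A=x_A]}{\Pr[Y=y]} = \Pr[Y=y\mid X_A=x_A]$, substituting this into each summand of line one yields line two. For the third equality, expand the remaining factor $K(X_B=x_B,Y=y) = \frac{\Pr[Y=y\mid X_B=x_B]}{\Pr[Y=y]}$ in line two, so the summand becomes $\frac{\Pr[Y=y\mid X_A=x_A]\Pr[Y=y\mid X_B=x_B]}{\Pr[Y=y]}$, matching line three.

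I do not anticipate a serious obstacle here; the statement is essentially a bookkeeping identity and the only real content is the single application of the conditional independence assumption in the first step. The mild technical point to watch is division by zero — one should implicitly restrict the outer sum to $y$ with $\Pr[Y=y]>0$ and assume $\Pr[X_A=x_A],\Pr[X_B=x_B]>0$ so that all the conditional probabilities and ratios $K(\cdot,\cdot)$ are well defined, exactly as needed for $K(X_A=x_A,X_B=x_B)$ to make sense in the first place. Beyond that caveat, the proof is a direct chain of substitutions.
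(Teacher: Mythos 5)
Your proof is correct and follows essentially the same route as the paper's: expand $\Pr[X_A=x_A,X_B=x_B]$ over $Y$, apply conditional independence, divide by the marginals, and rewrite using the posterior/prior form of $K$. The remark about restricting to $y$ with $\Pr[Y=y]>0$ is a reasonable minor caveat but adds nothing beyond the paper's argument.
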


We defer the proof to the appendix.

\section{General Model and Assumptions}\label{sec:model}

Let $X_A,X_B,Y$ be three random variables and we define prior $Q$ as the joint distribution over $X_A,X_B,Y$. We want to forecast the ground truth $Y$ whose realization is a signal in a finite set $\Sigma$. $X_A, X_B$ are two sources of information that are related to $Y$. $X_A$'s realization is a signal in a finite set $\Sigma_A$. $X_B$'s realization is a signal in a finite set $\Sigma_B$. We may have access to both of the realizations of $X_A$ and $X_B$ or only one of them. Thus, we need to learn the relationship between $X_A, X_B$ and $Y$ to forecast $Y$. It's impossible to learn by only accessing the samples of $X_A, X_B$ without additional assumption. We make the following conditional independence assumption:

\begin{assumption}[Conditional independence]\label{assume:coni}
    We assume that conditioning on $Y$, $X_A$, and $X_B$ are independent. 
\end{assumption}

Intuitively, $Y$ can be seen as the ``intersection'' between $X_A$ and $X_B$. To better understand this assumption and its limitations we return to our running example where the variable $Y$ is the success of a start-up.  In this case, if both $X_A$ and $X_B$ contain the sex of the CEO (which we assume is independent of $Y$), then this assumption will not hold.  To make it hold, either $Y$ would need to be redefined to contain the sex of the CEO, or this information would need to be removed from either $X_A$ or $X_B$. For the mechanism design application, if the assumption is violated, for example both agents are sexists and forecast using the sex of the CEO, then it is impossible to avoid paying them for this useless/harmful information.




\subsection{Well-defined and stable prior} 




We call $Z$ a \emph{solution} if conditioning on $Z$, $X_A$, and $X_B$ are independent. $Y$ is a solution. However, there are a lot of solutions. For example, conditioning on $X_A$ or $X_B$, $X_A$ and $X_B$ are independent, which means $X_A$ and $X_B$ are both solutions. Thus, we have an additional restriction on the prior: well-defined prior and stable prior. 

We will need restrictions on the prior when we analyze the strictness of our learning algorithm/mechanism. Readers can skip this section without losing the core idea of our results.

To infer the relationship between $Y$ and $X_A,X_B$ with only samples of $X_A,X_B$, we cannot do better than to just solve the system of equations (\ref{soe}), given the joint distribution over $X_A,X_B$: $Q$. Our goal is to obtain the Bayesian posterior predictor. Thus, we list a system that the Bayesian posterior predictor satisfies. The system below equations involve variables $\{\mathbf{a}^{x_A},\mathbf{b}^{x_B}\in\Delta_{\Sigma}\}_{x_A\in \Sigma_A,x_B\in \Sigma_B}$, and $\mathbf{r}\in\Delta_{\Sigma}$.   We insist $a^{x_A}_y=\Pr[Y=y|X_A=x_A]$, $b^{x_B}_y=\Pr[Y=y|X_B=x_B]$ and $r_y = \Pr[Y=y]$ is a solution and we call it the \emph{desired} solution.

\begin{align}\label{soe}
\mathcal{S}(\{\mathbf{a}^{x_A},&\mathbf{b}^{x_B}\}_{x_A\in \Sigma_A,x_B\in \Sigma_B},\mathbf{r})\\ \nonumber
:=&\bigg\{\sum_{y\in \Sigma} \frac{a^{x_A}_y b^{x_B}_y}{r_y}-K(X_A=x_A,X_B=x_B)\bigg\}_{x_A\in \Sigma_A,x_B\in \Sigma_B}=0
 \end{align}
 
Claim~\ref{claim:ci} shows the above system has the desired solution. 

Note that any permutation of a solution is still a valid solution\footnote{We may be able to distinguish a solution with its permuted version if we have some side information (e.g. the prior of $Y$/a few $(x_A,x_B,y)$ samples).}. Since we cannot do better than to solve the above system, if the above system only has one ``unique'' solution, in the sense that any two solutions are permuted version of each other, we call the prior $Q$ a well-defined prior. Formally,

\begin{definition}[Well-defined]
    A prior $Q$ is well-defined if for any two solutions $\{\mathbf{a}^{x_A},\mathbf{b}^{x_B}\}_{x_A\in \Sigma_A,x_B\in \Sigma_B}$, $\mathbf{r}$ and $\{\mathbf{c}^{x_A},\mathbf{d}^{x_B}\}_{x_A\in \Sigma_A,x_B\in \Sigma_B}$, $\mathbf{r}'$ of the system of equations (\ref{soe}), there exists a permutation $\pi: \Sigma\mapsto\Sigma$ such that $\mathbf{r}=\pi \mathbf{r}' $ for any $x_A,x_B$, $\mathbf{a}^{x_A}=\pi \mathbf{c}^{x_A} $, $\mathbf{b}^{x_B}=\pi \mathbf{d}^{x_B} $.
\end{definition}

The well-defined prior exist since intuitively, if $|\Sigma_A|$ and $|\Sigma_B|$ are high and $|\Sigma|$ is low, it is likely $Y$ is the ``unique intersection'' since the number of constraints of the system will be much greater than the number of variables.

We say a prior is stable if fixing part of the desired solution of the system (\ref{soe}), in order to make it still a solution of the system, other parts of the desired solution should also be fixed.

\begin{definition}[Stable]
    A prior $Q$ is stable if fixing 
    $a^{x_A}_y=\Pr[Y=y|X_A=x_A]$ and $r_y = \Pr[Y=y]$, the system (\ref{soe}) $\mathcal{S}(\{\mathbf{a}^{x_A},\mathbf{b}^{x_B}\}_{x_A\in \Sigma_A,x_B\in \Sigma_B},\mathbf{r})=0$ has unique solution $\mathbf{b}^{x_A}$ such that $b^{x_B}_y=\Pr[Y=y|X_B=x_B]$; and fixing 
    $b^{x_B}_y=\Pr[Y=y|X_B=x_B]$ and $r_y = \Pr[Y=y]$, the system (\ref{soe}) $\mathcal{S}(\{\mathbf{a}^{x_A},\mathbf{b}^{x_B}\}_{x_A\in \Sigma_A,x_B\in \Sigma_B},\mathbf{r})=0$ has unique solution $\mathbf{a}^{x_A}$ such that $a^{x_A}_y=\Pr[Y=y|X_A=x_A]$.
\end{definition}

We require stable priors when we design \emph{strictly} truthful mechanisms.

\subsection{Predictors}
This section gives the definition of predictors. We have two sets of samples $S_A:=\{x_A^{\ell}\}_{\ell\in {\mathcal{L}_A}}$ and $S_B:=\{x_B^{\ell}\}_{\ell\in {\mathcal{L}_B}}$ which are i.i.d samples of $X_A$ and $X_B$ respectively. For $\ell\in \mathcal{L}_A\cap\mathcal{L}_B$, $(x_A^{\ell},x_B^{\ell})$s are i.i.d samples of the joint random variable $(X_A,X_B)$.

A predictor $P_A:\Sigma_A\mapsto\Delta_{\Sigma}$ for $X_A$ maps $x_A\in\Sigma$ to a forecast $P_A(x_A)$ for ground truth $Y$. We similarly define the predictors for $X_B$. We define \emph{the Bayesian posterior predictor} as the predictor that maps any input information $X=x$ to its Bayesian posterior forecast for $Y=y$, i.e., $Pr(Y=y|X=x)$. 

With the conditional independence assumption, we have 
\begin{align*}
    \Pr[Y|X_A,X_B]=&\frac{\Pr[Y,X_A,X_B]}{\Pr[X_A,X_B]}\\ \tag{conditional independence}
    =& \frac{\Pr[Y]\Pr[X_A|Y]\Pr[X_B|Y]}{\Pr[X_A,X_B]}\\ \tag{$K(X_A,X_B)$ is the pointwise mutual information.}
    =& \frac{\Pr[Y|X_A]\Pr[Y|X_B]}{K(X_A,X_B)\Pr[Y]}\\
\end{align*}

When we have access to both the sources where $X_A=x_A$ and $X_B=x_B$, given the prior of the ground truth $Y$, we can construct an aggregated forecast for $Y=y$ using $P_A,P_B$: 

$$\frac{P_A(x_A)P_B(x_B)}{\Pr[Y=y]}\cdot\text{normalization}$$


In this case, if both $P_A$ and $P_B$ are the Bayesian posterior predictor, the aggregated forecast is the Bayesian posterior predictor as well. Thus, it's sufficient to only train $P_A$ and $P_B$. In the rest sections, we will show how to train $P_A$ and $P_B$ (Section~\ref{sec:commontruth}), given the two sets of samples $S_A$ and $S_B$, as well as how to incentivize high quality predictors from the crowds (Section~\ref{sec:forecastelicitation}).

\section{Co-training: finding the common ground truth}\label{sec:commontruth}

We have a set of candidates $\mathcal{H}_A$ for the predictor for $X_A$ and a set of candidates $\mathcal{H}_B$ for the predictor for $X_B$. We sometimes call each predictor candidate \emph{a hypothesis}. Given the two sets of samples $S_A=\{x_A^{\ell}\}_{\ell\in {\mathcal{L}_A}}$ and $S_B=\{x_B^{\ell}\}_{\ell\in {\mathcal{L}_B}}$, our goal is to figure out the best hypothesis in $\mathcal{H}_A$ and the best hypothesis in $\mathcal{H}_B$ simultaneously. Thus, we need to design proper ``loss function'' such that the best hypotheses minimize the loss. In fact, we will show how to design a proper ``reward function'' such that the best hypotheses maximize the reward.

\subsection{$f$-mutual information gain}\label{sec:fgain}


\paragraph{$f$-mutual information gain $MIG^f(R)$ (Figure~\ref{fig:fgain})}

\begin{description} 
\item[Hypothesis] We are given $\mathcal{H}_A=\{h_A:\Sigma_A\mapsto \Delta_{\Sigma}\}$, $\mathcal{H}_B=\{h_B:\Sigma_B\mapsto \Delta_{\Sigma}\}$: the set of hypotheses/predictor candidates for $X_A$ and  $X_B$, respectively.

\item[Gain] Given reward function $R:\Delta_{\Sigma}\times\Delta_{\Sigma}\mapsto \mathbb{R}$, \\ for each $\ell\in \mathcal{L}_A\cap \mathcal{L}_B$, reward ``the amount of agreement'' between the two predictor candidates' predictions for task $\ell$, i.e., 

$$R(h_A(x_A^{\ell}),h_B(x_B^{\ell}));$$ for each distinct pair $(\ell_A,\ell_B), \ell_A\in \mathcal{L}_A,\ell_B\in \mathcal{L}_B,\ell_A\neq \ell_B$, punish both predictor candidates ``the amount of agreement'' between their predictions for a pair of distinct tasks $(\ell_A,\ell_B)$, i.e., 
$$f^{\star}(R(h_A(x_A^{\ell_A}),h_B(x_B^{\ell_B})).$$

The $f$-mutual information gain $MIG^f(R)$ that is corresponding to the reward function $R$ is 
\begin{align*}
    MIG^f(R(h_A,h_B))_{|S_A,S_B}=&\frac{1}{|\mathcal{L}_A\cap \mathcal{L}_B|}\sum_{\ell \in \mathcal{L}_A\cap \mathcal{L}_B} R(h_A(x_A^{\ell}),h_B(x_B^{\ell}))\\
    -\frac{1}{|\mathcal{L}_A||\mathcal{L}_B|-|\mathcal{L}_A\cap \mathcal{L}_B|^2}&\sum_{\ell_A\in \mathcal{L}_A,\ell_B\in \mathcal{L}_B,\ell_A\neq \ell_B}f^{\star}(R(h_A(x_A^{\ell_A}),h_B(x_B^{\ell_B}))) 
\end{align*}
\end{description}

\begin{lemma}\label{lem:pplearn}
The expected total $f$-mutual information gain is maximized over all possible $R$, $h_A$, and $h_B$ if and only if for any $(x_A,x_B)\in\Sigma_A\times\Sigma_B$, $$R(h_A(x_A),h_B(x_B))\in \partial{f}(K(x_A,x_B)).$$ The maximum is $MI^f(X_A;X_B).$
\end{lemma}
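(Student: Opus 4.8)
The plan is to compute the expectation of $MIG^f(R(h_A,h_B))_{|S_A,S_B}$ over the random draw of the samples $S_A,S_B$ and then apply the dual characterization of $f$-mutual information (Lemma~\ref{lemma:duality}) term by term. Since the samples indexed by $\mathcal{L}_A\cap\mathcal{L}_B$ are i.i.d.\ draws of the joint variable $(X_A,X_B)$, each summand $R(h_A(x_A^{\ell}),h_B(x_B^{\ell}))$ in the first sum has expectation $\E_{(X_A,X_B)\sim \mathbf{U}_{X_A,X_B}}[R(h_A(X_A),h_B(X_B))]$; and since for a distinct pair $(\ell_A,\ell_B)$ the samples $x_A^{\ell_A}$ and $x_B^{\ell_B}$ are independent draws from the marginals of $X_A$ and $X_B$, each summand $f^{\star}(R(h_A(x_A^{\ell_A}),h_B(x_B^{\ell_B})))$ in the second sum has expectation $\E_{(X_A,X_B)\sim \mathbf{V}_{X_A,X_B}}[f^{\star}(R(h_A(X_A),h_B(X_B)))]$. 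The normalizing constants in front of the two sums are exactly the number of terms, so by linearity of expectation
\begin{align*}
    \E_{S_A,S_B}\big[MIG^f(R(h_A,h_B))_{|S_A,S_B}\big]
    =\; & \E_{\mathbf{U}_{X_A,X_B}}\big[R(h_A(X_A),h_B(X_B))\big] \\
    & - \E_{\mathbf{V}_{X_A,X_B}}\big[f^{\star}(R(h_A(X_A),h_B(X_B)))\big].
\end{align*}

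Next I would recognize the right-hand side as exactly the objective appearing inside the supremum in Lemma~\ref{lemma:duality}, with the test function $u(x_A,x_B) := R(h_A(x_A),h_B(x_B))$. One subtlety: the supremum in Lemma~\ref{lemma:duality} ranges over arbitrary functions $\mathcal{G}$ from $\Sigma_A\times\Sigma_B$ to $\mathbb{R}$, whereas here $u$ is constrained to factor through the pair $(h_A,h_B)$. I would argue that since $h_A,h_B,R$ all range freely over their respective domains, the composite $u$ can in fact realize any function $\Sigma_A\times\Sigma_B\to\mathbb{R}$ — for instance, take $h_A,h_B$ to be injective maps into $\Delta_\Sigma$ (possible as long as $\Sigma$ has at least two elements, which we may assume, and in the degenerate case the bound is trivial) and let $R$ be arbitrary on the image. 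Hence the supremum over $(R,h_A,h_B)$ of the displayed expression equals the supremum over all $u\in\mathcal{G}$, which by Lemma~\ref{lemma:duality} equals $MI^f(X_A;X_B)$. This establishes both that the maximum value is $MI^f(X_A;X_B)$ and that it is attained.

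Finally, for the ``if and only if'' characterization of the maximizers, I would invoke the equality condition in Lemma~\ref{lemma:duality}: the bound is tight precisely when $u(x_A,x_B) = u^*(x_A,x_B) \in \partial f(K(X_A=x_A,X_B=x_B))$ for every $(x_A,x_B)$. Rewriting $u$ as $R(h_A(x_A),h_B(x_B))$ gives exactly the stated condition $R(h_A(x_A),h_B(x_B))\in\partial f(K(x_A,x_B))$.

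I expect the main obstacle to be the step bridging the constrained supremum (over triples $(R,h_A,h_B)$, where $u$ must factor through the predictor outputs) and the unconstrained supremum of Lemma~\ref{lemma:duality} (over all functions on $\Sigma_A\times\Sigma_B$): one must check that the factorization imposes no real restriction, handling the edge cases where $\Sigma_A$, $\Sigma_B$, or $\Sigma$ is a singleton, and being careful that the ``only if'' direction of the maximizer characterization is stated in terms of the composite $R(h_A(\cdot),h_B(\cdot))$ rather than in terms of $h_A,h_B,R$ individually (which are of course not unique). The expectation computation itself is routine once the independence structure of same-task versus distinct-task sample pairs is spelled out.
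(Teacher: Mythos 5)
Your proposal is correct and takes essentially the same route as the paper's proof: compute the expectation of the gain (same-task pairs are distributed according to the joint $\mathbf{U}_{X_A,X_B}$, distinct-task pairs according to the product of marginals $\mathbf{V}_{X_A,X_B}$), obtaining $\E_{\mathbf{U}_{X_A,X_B}} R-\E_{\mathbf{V}_{X_A,X_B}} f^{\star}(R)$, and then apply Lemma~\ref{lemma:duality}. Your only addition is the explicit check that the composite $u(x_A,x_B)=R(h_A(x_A),h_B(x_B))$ can realize an arbitrary test function (via injective $h_A,h_B$ and arbitrary $R$), a point the paper's one-line proof leaves implicit.
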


\begin{proof}
$(x_A^{\ell},x_B^{\ell})_{\ell}$ are i.i.d. realizations of $(X_A,X_B)$. Therefore, the expected $f$-mutual information gain is $\E_{U_{X_A,X_B}} R - \E_{V_{X_A,X_B}} f^{\star}(R).$ The results follow from Lemma~\ref{lemma:duality}.
\end{proof}

Although any reward function corresponds to an $f$-mutual information gain function, we need to properly design the reward function $R$ such that, fixing $R$, there exist hypotheses to maximize the corresponding $f$-mutual information gain $MIG^f(R)$ to the $f$-mutual information between the two sources. We will use the intuition from Lemma~\ref{lem:pplearn} to design such reward functions $R$ in the next section.



\subsection{Maximizing the $f$-mutual information gain}



In this section, we will construct a special reward function $R^f$ and then show that the maximizers of the corresponding $f$-mutual information gain $MIG^f(R^f)$ are the Bayesian posterior predictors.

\begin{definition}[$R^f$]\label{def:agree} We define reward function $R^f$ as a function that maps the two hypotheses' outputs $\mathbf{p}_1,\mathbf{p}_2\in \Delta_{\Sigma}$ and the vector $\mathbf{p}\in \Delta_{\Sigma}$ to  $$R^f(\mathbf{p}_1,\mathbf{p}_2,\mathbf{p}):=g\bigg(\sum_{y}{\frac{\mathbf{p}_1(y) \mathbf{p}_2(y)}{\mathbf{p}(y)} }\bigg)$$ where $g(t)\in \partial{f}(t),\forall t$. When $f$ is differentiable, $$R^f(\mathbf{p}_1,\mathbf{p}_2,\mathbf{p}):=f'\bigg(\sum_{y}{\frac{\mathbf{p}_1(y) \mathbf{p}_2(y)}{\mathbf{p}(y)} }\bigg).$$\end{definition}

With this definition of the reward function, fixing $\mathbf{p}\in \Delta_{\Sigma}$ which can be seen as the prior over $Y$, the ``amount of agreement'' between two predictions $\mathbf{p}_1,\mathbf{p}_2$ are an increasing function $g$ of 

$$\sum_{y}{\frac{\mathbf{p}_1(y) \mathbf{p}_2(y)}{\mathbf{p}(y)} }, $$ which is intuitive and reasonable. The increasing function $g$ is the derivative of the convex function $f$. By carefully choosing convex function $f$, we can use any increasing function $g$ here.

\begin{example}
Here we present some examples of the $f$-mutual information gain $MIG^f(R^f)$ with reward function $R^f$, associated with different $f$-divergences. We use Table 1 as reference for $\partial{f}(\cdot)$ and $f^{\star}(\partial{f}(\cdot))$. 

\vspace{5pt}

Total variation distance:
\begin{align*}
    &\frac{1}{|\mathcal{L}_A\cap \mathcal{L}_B|}\sum_{\ell \in \mathcal{L}_A\cap \mathcal{L}_B}
    sign\bigg(log[\sum_{y}{\frac{h_A(x_A^{\ell})(y) h_B(x_B^{\ell})(y)}{\mathbf{p}(y)} }]\bigg)\\
    &-\frac{1}{|\mathcal{L}_A||\mathcal{L}_B|-|\mathcal{L}_A\cap \mathcal{L}_B|^2}\sum_{\ell_A\in \mathcal{L}_A,\ell_B\in \mathcal{L}_B,\ell_A\neq \ell_B} sign\bigg(log[\sum_{y}{\frac{h_A(x_A^{\ell_A})(y) h_B(x_B^{\ell_B})(y)}{\mathbf{p}(y)} }]\bigg)
\end{align*}

KL divergence:

\begin{align*}
    &\frac{1}{|\mathcal{L}_A\cap \mathcal{L}_B|}\sum_{\ell \in \mathcal{L}_A\cap \mathcal{L}_B}
    \bigg(1+log[\sum_{y}{\frac{h_A(x_A^{\ell})(y) h_B(x_B^{\ell})(y)}{\mathbf{p}(y)} }]\bigg)\\
    &-\frac{1}{|\mathcal{L}_A||\mathcal{L}_B|-|\mathcal{L}_A\cap \mathcal{L}_B|^2}\sum_{\ell_A\in \mathcal{L}_A,\ell_B\in \mathcal{L}_B,\ell_A\neq \ell_B} \bigg(\sum_{y}{\frac{h_A(x_A^{\ell_A})(y) h_B(x_B^{\ell_B})(y)}{\mathbf{p}(y)} }\bigg)
\end{align*}

Pearson: 
\begin{align*}
    &\frac{1}{|\mathcal{L}_A\cap \mathcal{L}_B|}\sum_{\ell \in \mathcal{L}_A\cap \mathcal{L}_B}
    2*\bigg(\sum_{y}{\frac{h_A(x_A^{\ell})(y) h_B(x_B^{\ell})(y)}{\mathbf{p}(y)} }-1\bigg)\\
    &-\frac{1}{|\mathcal{L}_A||\mathcal{L}_B|-|\mathcal{L}_A\cap \mathcal{L}_B|^2}\sum_{\ell_A\in \mathcal{L}_A,\ell_B\in \mathcal{L}_B,\ell_A\neq \ell_B} \bigg((\sum_{y}{\frac{h_A(x_A^{\ell_A})(y) h_B(x_B^{\ell_B})(y)}{\mathbf{p}(y)} })^2-1\bigg)
\end{align*}
\end{example}

\begin{theorem}\label{thm:ppl}
With the conditional independent assumption on $X_A,X_B,Y$, given the samples $S_A,S_B$, given a convex function $f$, we define the optimization goal as the expected $f$-mutual information gain with reward function $R^f$, i.e., 
\begin{align*}
    MIG^f(h_A,h_B,\mathbf{p}):=\E_{X_A,X_B}MIG^f(R^f(h_A,h_B,\mathbf{p}))_{|S_A,S_B}\\
\end{align*} and optimize over all possible hypotheses $h_A:\Sigma_A\mapsto \Delta_{\Sigma}$, $h_B:\Sigma_B\mapsto \Delta_{\Sigma}$ and distribution vectors $\mathbf{p}\in \Delta_{\Sigma}$. We have

\begin{description}
    \item [Solution$\rightarrow$Maximizer:] any solution $Z$ corresponds to a maximizer of $MIG^f(h_A,h_B,\mathbf{p})$\footnote{Given the prior over $Y$, we can fix $\mathbf{p}$ as the prior over $Y$. Without knowing the prior over $Y$, $\mathbf{p}$ becomes a variable of the optimization goal and helps us learn the prior over $Y$. }: for any solution $Z$, $$h_A^*(x_A):=(\Pr[Z=y|X_A=x_A])_y\qquad h_B^*(x_B):=(\Pr[Z=y|X_B=x_B])_y\footnote{Recall that we use $(\phi(y))_{y\in[N]}$ to represent the vector $(\phi(1),\phi(2),...,\phi(N))\in \mathbb{R}^N$. }$$ and the prior over $Z$, $\Pr[Z=y]_y$, is the maximizer of $MIG^f(h_A,h_B,\mathbf{p})$ and the maximum is $MI^f(X_A;X_B)$;

\item [Maximizer$\rightarrow$(Permuted) Ground truth] when the prior is well-defined, $f$ is differentiable, and $f'$ is invertible, any maximizer of $MIG^f(h_A,h_B,\mathbf{p})$ corresponds to the (possibly permuted) ground truth $Y$: for any maximizer $(h_A^*(\cdot),h_B^*(\cdot),\mathbf{p}^*)$ of $MIG^f(h_A,h_B,\mathbf{p})$, there exists a permutation $\pi$ such that $$h_A^*(x_A):=(\Pr[\pi(Y)=y|X_A=x_A])_y\qquad h_B^*(x_B):=(\Pr[\pi(Y)=y|X_B=x_B])_y$$ and $\mathbf{p}^*=\Pr[\pi(Y)=y]_y$. 

\end{description}


\end{theorem}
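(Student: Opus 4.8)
The plan is to prove the two parts separately, using Lemma~\ref{lem:pplearn} as the workhorse for the first and the well-definedness hypothesis as the workhorse for the second. For the \emph{Solution$\rightarrow$Maximizer} direction, I would first observe that for \emph{any} reward function $R$, Lemma~\ref{lem:pplearn} tells us that the expected $f$-mutual information gain over all choices of $R,h_A,h_B$ is at most $MI^f(X_A;X_B)$, with equality iff $R(h_A(x_A),h_B(x_B))\in\partial f(K(x_A,x_B))$ pointwise. So it suffices to show that the specific reward function $R^f$ together with the candidate hypotheses $h_A^*,h_B^*$ and $\mathbf{p}^*=(\Pr[Z=y])_y$ actually realizes this pointwise optimality condition. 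This is exactly where Claim~\ref{claim:ci} enters: since $Z$ is a solution (i.e. $X_A,X_B$ are conditionally independent given $Z$), Claim~\ref{claim:ci} gives
\[
\sum_y \frac{\Pr[Z=y\mid X_A=x_A]\,\Pr[Z=y\mid X_B=x_B]}{\Pr[Z=y]} = K(X_A=x_A,X_B=x_B),
\]
so plugging $\mathbf{p}_1=h_A^*(x_A)$, $\mathbf{p}_2=h_B^*(x_B)$, $\mathbf{p}=\mathbf{p}^*$ into the definition of $R^f$ yields $g(K(x_A,x_B))\in\partial f(K(x_A,x_B))$, which is precisely the equality condition of Lemma~\ref{lem:pplearn}. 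Hence this triple is a maximizer and the maximum value is $MI^f(X_A;X_B)$.

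For the \emph{Maximizer$\rightarrow$(Permuted) Ground truth} direction, I would start from an arbitrary maximizer $(h_A^*,h_B^*,\mathbf{p}^*)$. Since the maximum value $MI^f(X_A;X_B)$ is attained, Lemma~\ref{lem:pplearn}'s equality condition forces $R^f(h_A^*(x_A),h_B^*(x_B),\mathbf{p}^*)\in\partial f(K(x_A,x_B))$ for all $(x_A,x_B)$; since $f$ is differentiable this reads $f'\!\big(\sum_y \frac{h_A^*(x_A)(y)h_B^*(x_B)(y)}{\mathbf{p}^*(y)}\big)=f'(K(x_A,x_B))$, and since $f'$ is invertible we conclude
\[
\sum_y \frac{h_A^*(x_A)(y)\,h_B^*(x_B)(y)}{\mathbf{p}^*(y)} = K(X_A=x_A,X_B=x_B)\quad\text{for all }x_A,x_B.
\]
But this is exactly the statement that $\big(\{h_A^*(x_A)\}_{x_A},\{h_B^*(x_B)\}_{x_B},\mathbf{p}^*\big)$ is a solution of the system of equations~(\ref{soe}) (one should double-check here that the $h_A^*(x_A),h_B^*(x_B)$ are genuinely in $\Delta_\Sigma$ — which holds since hypotheses output distributions — and that $\mathbf{p}^*\in\Delta_\Sigma$, so the system is being evaluated at a legitimate point, and there is a minor point that $\mathbf{p}^*(y)>0$ wherever it is divided by, which one handles by noting the desired solution has full support or by restricting $\Sigma$). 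By Claim~\ref{claim:ci} the ``desired'' solution built from the true ground truth $Y$ is also a solution of~(\ref{soe}), so the well-definedness of the prior $Q$ yields a permutation $\pi$ relating the two solutions coordinatewise, which gives $h_A^*(x_A)=(\Pr[\pi(Y)=y\mid X_A=x_A])_y$, $h_B^*(x_B)=(\Pr[\pi(Y)=y\mid X_B=x_B])_y$, and $\mathbf{p}^*=(\Pr[\pi(Y)=y])_y$, as claimed.

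The routine parts are the bookkeeping about empirical-vs-expected gain (the i.i.d.\ sampling collapses the double sum to the product-of-marginals expectation, already handled in the proof of Lemma~\ref{lem:pplearn}) and the support/positivity caveats above. The main obstacle — really the only nontrivial conceptual step — is recognizing that the equality case of the Fenchel duality bound, after applying invertibility of $f'$, is \emph{literally} the system~(\ref{soe}), so that ``maximizer'' and ``solution of~(\ref{soe})'' become interchangeable and the well-definedness hypothesis can be invoked verbatim; everything else is assembling Lemma~\ref{lem:pplearn}, Claim~\ref{claim:ci}, and the definition of a well-defined prior.
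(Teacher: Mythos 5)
Your proposal is correct and follows essentially the same route as the paper's own proof: part one applies Lemma~\ref{lem:pplearn} together with Claim~\ref{claim:ci} to show the solution-based triple attains the Fenchel-duality equality condition, and part two uses differentiability and invertibility of $f'$ to turn the equality condition into system~(\ref{soe}) and then invokes well-definedness to obtain the permutation. The only additions are your (reasonable) positivity/support caveats, which the paper leaves implicit.
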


The above theorem neither investigates computation complexity (which may be affected by the choice of $f$), data requirements, nor the choice of the hypothesis class for practical implementation (see Section~\ref{sec:discussions} for more discussion).

\begin{proof}[Proof for Theorem~\ref{thm:ppl}]

Lemma~\ref{lem:pplearn} shows that the expected $f$-mutual information gain is maximized if and only if for any $(x_A,x_B)$, $$R^f(h_A^*(x_A),h_B^*(x_B),\mathbf{p}^*)\in \partial{f}(K(x_A,x_B)).$$ 

(1)\emph{ Solution$\rightarrow$Maximizer:} For any solution $Z$, we can construct $$h_A^*(x_A):=(\Pr[Z=y|X_A=x_A])_y\qquad h_B^*(x_B):=(\Pr[Z=y|X_B=x_B])_y$$ and $\mathbf{p}^*=\Pr[Z=y]_y$. Then 

\begin{align*}
    R^f(h_A^*(x_A),h_B^*(x_B),\mathbf{p}^*)&\in \partial{f}\bigg(\sum_{y}{\frac{\Pr[Z=y|X_A=x_A]\Pr[Z=y|X_B=x_B]}{\Pr[Z=y]}}\bigg)\\ \tag{Claim~\ref{claim:ci}}
    &= \partial{f}(K(x_A,x_B)
\end{align*}

Thus, based on Lemma~\ref{lem:pplearn}, any solution $Z$ corresponds to a maximizer of the optimization goal. 

(2)\emph{Maximizer$\rightarrow$(Permuted) Ground truth:} For any maximizer $(h_A^*(\cdot),h_B^*(\cdot),\mathbf{p}^*)$ of the optimization goal, when $f$ is differentiable, Lemma~\ref{lem:pplearn} shows that  $$R^f(h_A^*(x_A),h_B^*(x_B),\mathbf{p}^*)=f'(K(x_A,x_B)).$$ When $f'$ is invertible, we have 
\begin{align*}
    \sum_{y}{\frac{h_A^*(x_A)(y) h_B^*(x_B)(y)}{\mathbf{p}^*(y)}}=K(x_A,x_B)
\end{align*} for all $x_A,x_B$.

Thus, $\{(h_A^*(x_A),h_B^*(x_B),\mathbf{p}^*)\}_{x_A,x_B}$ is actually the solution of the system (\ref{soe}). When the prior is well-defined, there exists a permutation $\pi$ such that $$h_A^*(x_A):=(\Pr[\pi(Y)=y|X_A=x_A])_y\qquad h_B^*(x_B):=(\Pr[\pi(Y)=y|X_B=x_B])_y$$ and $\mathbf{p}^*=\Pr[\pi(Y)=y]_y$ where $Y$ is the ground truth.

\end{proof}

\section{Forecast elicitation without verification}\label{sec:forecastelicitation}
This section considers the setting where the forecasts are provided by the crowds and we want to incentivize high quality forecast by providing an instant reward without instant access to the ground truth. 

There is a forecasting task. Alice and Bob have private information $X_A,X_B=x_A\in\Sigma_A,x_B\in\Sigma_B$ correspondingly and are asked to forecast the ground truth $Y=y$. We denote $(\Pr[Y=y|X_A=x_A])_y$, $(\Pr[Y=y|X_B=x_B])_y$ by $\mathbf{p}_{x_A}$, $\mathbf{p}_{x_B}$ correspondingly. Alice and Bob are asked to report their Bayesian forecast $\mathbf{p}_{x_A}$, $\mathbf{p}_{x_B}$. We denote their actual reports by $\hat{\mathbf{p}}_{x_A}$ and $\hat{\mathbf{p}}_{x_B}$. Without access to the realization of $Y$, we want to incentivize both Alice and Bob play \emph{truth-telling} strategies, i.e., honestly reporting their forecast $\mathbf{p}_{x_A}$, $\mathbf{p}_{x_B}$ for $Y$.

We define the \emph{strategy} of Alice as a mapping $s_A$ from $x_A$ (private signal) to a probability distribution over the space of all possible forecast for random variable $Y$. Analogously, we define Bob's strategy $s_B$. Note that essentially each (possibly mixed) strategy $s_A$ can be seen as a (possibly random) predictor $P_A$ where $P_A(x_A)$ is a random forecast drawn from distribution $s_A(x_A)$. In particular, the truthful strategy corresponds to the Bayesian posterior predictor.

We say agents play a \emph{permutation strategy profile} if there exists permutation $\pi:\Sigma\mapsto \Sigma$ such that each agent always reports $\pi \mathbf{p}$ given her truthful report is $\mathbf{p}$. 

Note that without any side information about $Y$, we cannot distinguish the scenario where agents are honest and the scenario where agents play a permutation strategy profile. Thus, it is too much to ask truth-telling to be strictly better than any other strategy profile. The focal property defined in the following paragraph is the optimal property we can obtain.

\paragraph{Mechanism Design Goals}
\begin{description}
\item[(Strictly) Truthful] Mechanism $\mathcal{M}$ is (strictly) truthful if truth-telling is a (strict) equilibrium. 
\item[Focal] Mechanism $\mathcal{M}$ is focal if it is strictly truthful and each agent's expected payment is maximized if agents tell the truth; moreover, when agents play a non-permutation strategy profile, each agent's expected payment is \emph{strictly} less.
\end{description}

We consider two settings:
\begin{description}
\item[Multi-task] Each agent is assigned several independent a priori similar forecasting tasks in a random order and is asked to report her forecast for each task. 
\item[Single-task] All agents are asked to report their forecast for the same single task. 
\end{description}

In the single-task setting, it's impossible to design focal mechanisms since agents can collaborate to pick an arbitrary $y^*\in\Sigma$ and pretend that they know $Y=y^*$. However, we will show we can design strictly truthful mechanism in the single-task setting. In the multi-task setting, since agents may be assigned different tasks and the tasks show in random order, they cannot collaborate to pick an arbitrary $y^*\in\Sigma$ for each task. In fact, we will show if the number of tasks is greater or equal to 2, we can design a family of focal mechanisms.

Achieving the focal goal in the multi-task setting is very similar to what we did in finding the common ground truth. Note that in the forecast elicitation problem, incentivizing a truthful strategy is equivalent to incentivizing the Bayesian posterior predictor. Thus, we can directly use the $f$-mutual information gain as the reward in the multi-task setting. Achieving the strictly truthful goal in the single-task setting is more tricky and we will return to it later.

\subsection{Multi-task: focal forecast elicitation without verification}\label{sec:multi}

We assume Alice is assigned tasks set $\mathcal{L}_A$ and Bob is assigned tasks set $\mathcal{L}_B$. For each task $\ell$, Alice's private information is $x_A^{\ell}$ and Bob's private information is $x_B^{\ell}$. The ground truth of this task is $y^{\ell}$.

\paragraph{Multi-task common ground mechanism $MCG(f)$} Given the prior distribution over $Y$, a convex and \emph{differentiable} function $f$ whose convex conjugate is $f^{\star}$, 
\begin{description} 
\item[Report] for each task $\ell\in\mathcal{L}_A$, Alice is asked to report $\mathbf{p}_{{x_A}^{\ell}}:=(\Pr[Y=y|x_A^{\ell}])_y$; for each task $\ell\in\mathcal{L}_B$, Bob is asked to report $\mathbf{p}_{{x_B}^{\ell}}:=(\Pr[Y=y|x_B^{\ell}])_y$. We denote their actual reports by $\hat{\mathbf{p}}_{{x_A}^{\ell}}^{\ell}$ and $\hat{\mathbf{p}}_{{x_B}^{\ell}}^{\ell}$. 
\item[Payment] For each $\ell\in \mathcal{L}_A\cap \mathcal{L}_B$, reward both Alice and Bob ``the amount of agreement'' between their forecast in task $\ell$, i.e., 

$$R(\hat{\mathbf{p}}_{{x_A}^{\ell}}^{\ell},\hat{\mathbf{p}}_{{x_B}^{\ell}}^{\ell});$$ for each pair of distinct tasks $(\ell_A,\ell_B), \ell_A\in \mathcal{L}_A,\ell_B\in \mathcal{L}_B,\ell_A\neq \ell_B$, punish both Alice and Bob ``the amount of agreement'' between their forecast in distinct tasks $(\ell_A,\ell_B)$, i.e., 
$$f^{\star}(R(\hat{\mathbf{p}}_{{x_A}^{\ell_A}}^{\ell_A},\hat{\mathbf{p}}_{{x_B}^{\ell_B}}^{\ell_B}).$$

In total, both Alice and Bob are paid
\begin{align*}
    &\frac{1}{|\mathcal{L}_A\cap \mathcal{L}_B|}\sum_{\ell \in \mathcal{L}_A\cap \mathcal{L}_B} R(\hat{\mathbf{p}}_{{x_A}^{\ell}}^{\ell},\hat{\mathbf{p}}_{{x_B}^{\ell}}^{\ell})\\
    &-\frac{1}{|\mathcal{L}_A||\mathcal{L}_B|-|\mathcal{L}_A\cap \mathcal{L}_B|^2}\sum_{\ell_A\in \mathcal{L}_A,\ell_B\in \mathcal{L}_B,\ell_A\neq \ell_B}f^{\star}(R(\hat{\mathbf{p}}_{{x_A}^{\ell_A}}^{\ell_A},\hat{\mathbf{p}}_{{x_B}^{\ell_B}}^{\ell_B})
\end{align*}

where $$R(\mathbf{p}_1,\mathbf{p}_2):=f'(\sum_{y}\frac{\mathbf{p}_1(y) \mathbf{p}_2(y)}{\Pr[Y=y]}).$$
\end{description}

We do not want agents to collaborate with each other based on the index of the task or other information in addition to the private information. Thus, we make the following assumption to guarantee the index of the task is meaningless for all agents. 

\begin{assumption}[A priori similar and random order]
    For each task $\ell$, fresh i.i.d. realizations of $(X_A,X_B,Y)=(x_A^{\ell},x_B^{\ell},y^{\ell})$ are generated. All tasks appear in a random order, independently drawn for each agent.
\end{assumption}

\begin{theorem}\label{thm:focal}
With the conditional independence assumption, and a priori similar and random order assumption, when the prior $Q$ is stable and well-defined, given the prior distribution over the $Y$, given a differential convex function $f$ whose derivative $f'$ is invertible, if $\max\{|\mathcal{L}_A|,|\mathcal{L}_B|\}\geq 2$, then $MCG(f)$ is focal. 

When both Alice and Bob are honest, each of them's expected payment in $MCG(f)$ is
$$ MI^f(X_A;X_B). $$
\end{theorem}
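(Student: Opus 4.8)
The plan is to reduce the multi-task mechanism analysis to the co-training optimization result, Theorem~\ref{thm:ppl}, by observing that the expected payment of an agent under a fixed strategy profile $(s_A,s_B)$ is exactly $MIG^f(h_A,h_B,\mathbf{p})$ where $h_A,h_B$ are the (possibly random) predictors induced by the strategies and $\mathbf{p}$ is fixed to the known prior $\Pr[Y]$. First I would make this identification precise: by the a priori similar and random order assumption, for each task $\ell\in\mathcal{L}_A\cap\mathcal{L}_B$ the pair $(x_A^{\ell},x_B^{\ell})$ is a fresh i.i.d.\ draw from $(X_A,X_B)$, and for a distinct pair $(\ell_A,\ell_B)$ the reports $\hat{\mathbf{p}}_{x_A^{\ell_A}}^{\ell_A},\hat{\mathbf{p}}_{x_B^{\ell_B}}^{\ell_B}$ are independent draws distributed as the marginals; since the tasks appear in a random order independently for each agent, no agent can condition on the task index, so the best an agent can do is commit to a strategy that is a function of the private signal alone. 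Taking expectations, the expected payment equals $\E_{U_{X_A,X_B}}R(h_A,h_B) - \E_{V_{X_A,X_B}}f^{\star}(R(h_A,h_B)) = MIG^f(h_A,h_B,\Pr[Y])$.

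Next I would invoke Theorem~\ref{thm:ppl}. Because $\mathbf{p}$ is now pinned to the true prior rather than free, I need the (easy) observation that $\Pr[Y]$ is in fact the $\mathbf{p}$-coordinate of a maximizer — this follows from the ``Solution$\rightarrow$Maximizer'' direction applied to the solution $Z=Y$ — so the unconstrained maximum $MI^f(X_A;X_B)$ is still attainable with $\mathbf{p}=\Pr[Y]$, and hence the constrained maximum over strategies also equals $MI^f(X_A;X_B)$, achieved by truth-telling (the Bayesian posterior predictor, by Claim~\ref{claim:ci}). This gives: truth-telling maximizes each agent's expected payment, proving the non-strict part of focality and the final displayed value. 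For the equilibrium claim, fixing Bob at truth-telling, Alice's payment is a function of $h_A$ alone and is maximized at the Bayesian posterior predictor; strictness here, and more generally the claim that any \emph{non-permutation} profile pays strictly less, is where the ``Maximizer$\rightarrow$(Permuted) Ground truth'' direction of Theorem~\ref{thm:ppl} enters: any strategy profile achieving the maximum must, by that theorem (using well-definedness, differentiability of $f$, invertibility of $f'$), be a permuted Bayesian posterior predictor, hence a permutation strategy profile; so every non-permutation profile is strictly suboptimal.

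The main obstacle is handling \emph{mixed} (randomized) strategies and the case $\max\{|\mathcal{L}_A|,|\mathcal{L}_B|\}\geq 2$ with possibly $|\mathcal{L}_A\cap\mathcal{L}_B|$ small — in particular ensuring the distinct-pair term is well-defined and that an agent genuinely cannot exploit the task ordering. For mixed strategies I would note that $MIG^f$ is linear in the first argument of each evaluation through $R$ only in a limited sense, so I should instead argue directly: a mixed strategy induces a distribution over deterministic predictors, the expected payment is the corresponding average of $MIG^f$ values, and since the maximum $MI^f(X_A;X_B)$ is a hard ceiling, a mixed strategy attains it only if it is supported on maximizers; combined with the uniqueness-up-to-permutation from well-definedness, and the fact (needed here) that a genuinely random mixture of \emph{distinct} permuted posteriors is itself not of the required functional form unless it is degenerate — this last point is the delicate step and is exactly where the \textbf{stable} prior hypothesis should be used, to rule out mixtures that are ``accidentally'' still solutions of system~\eqref{soe}. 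I would structure the strictness argument around stability: fixing one agent's (permuted, honest) report forces the other agent's report, so any deviation by a single agent strictly decreases the payment, and then bootstrap to arbitrary non-permutation profiles.
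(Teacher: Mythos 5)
Your proposal is correct and follows essentially the same route as the paper: identify each agent's expected payment with the expected $f$-mutual information gain (the paper's Lemma~\ref{lem:focal}, reducing to Lemma~\ref{lem:pplearn} via the a priori similar and random order assumption), use the duality characterization so that the maximum $MI^f(X_A;X_B)$ is attained exactly when the reward equals $f'(K(x_A,x_B))$ pointwise, then invoke well-definedness to conclude any maximizing profile is a permutation profile (focality) and stability to force truth-telling as the unique best response to an honest opponent (strict truthfulness). Your extra care with mixed strategies and with small $|\mathcal{L}_A\cap\mathcal{L}_B|$ is a reasonable tightening of details the paper passes over quickly, but it does not change the argument's structure.
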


The non-negativity of $MI^f$ implies that agents are willing to participate in the mechanism. Like Theorem~\ref{thm:ppl}, in order to show Theorem~\ref{thm:focal}, we need to first introduce a lemma which is very similar to Lemma~\ref{lem:pplearn}.

\begin{lemma}\label{lem:focal}
With the conditional independence assumption, the expected total payment is maximized over Alice and Bob's strategies if and only if $\forall \ell_1 \in \mathcal{L}_A, \ell_2 \in \mathcal{L}_B$, for any $(x_A^{\ell_1},x_B^{\ell_2})\in\Sigma_A\times\Sigma_B$, $$R(\hat{\mathbf{p}}_{{x_A}^{\ell_1}}^{\ell_1},\hat{\mathbf{p}}_{{x_B}^{\ell_2}}^{\ell_2})=f'(K(x_A^{\ell_1},x_B^{\ell_2})).$$ The maximum is $MI^f(X_A;X_B).$
\end{lemma}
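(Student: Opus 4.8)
\textbf{Proof plan for Lemma~\ref{lem:focal}.}

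The plan is to reduce the statement to the dual characterization of $f$-mutual information (Lemma~\ref{lemma:duality}) exactly as was done in the proof of Lemma~\ref{lem:pplearn}, but now being careful to track the randomness coming from the random order of tasks and from the (possibly mixed) strategies of Alice and Bob. First I would fix arbitrary strategies $s_A,s_B$ of Alice and Bob and rewrite the total expected payment. By the a priori similar and random order assumption, for a fixed pair of indices $(\ell_1,\ell_2)$ with $\ell_1\in\mathcal{L}_A$, $\ell_2\in\mathcal{L}_B$ and $\ell_1\ne\ell_2$, the pair $(x_A^{\ell_1},x_B^{\ell_2})$ is distributed as an independent draw from $\mathbf{V}_{X_A,X_B}$ (product of marginals), since the two tasks are distinct and their realizations are independent; for $\ell_1=\ell_2=\ell$ the pair $(x_A^{\ell},x_B^{\ell})$ is distributed as a draw from $\mathbf{U}_{X_A,X_B}$ (the joint). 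Because the task order is independent and uniform for each agent, each of the averaged terms in the definition of $MCG(f)$ is an unbiased estimator of the corresponding population quantity, so the expected total payment equals
\[
\E_{\mathbf{U}_{X_A,X_B}}\big[ R(\hat{\mathbf{p}}_{x_A},\hat{\mathbf{p}}_{x_B})\big] - \E_{\mathbf{V}_{X_A,X_B}}\big[ f^{\star}\big(R(\hat{\mathbf{p}}_{x_A},\hat{\mathbf{p}}_{x_B})\big)\big],
\]
where the reports $\hat{\mathbf{p}}_{x_A},\hat{\mathbf{p}}_{x_B}$ are the (random) outputs of $s_A,s_B$ on the relevant private signals and the inner expectation also averages over the strategies' internal randomness.

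Next I would view the quantity $u(x_A,x_B) := \E[\,R(\hat{\mathbf p}_{x_A},\hat{\mathbf p}_{x_B})\mid X_A=x_A, X_B=x_B\,]$ --- wait, more carefully: since $f^\star$ need not be linear, I cannot simply push the expectation over the strategies' randomness inside $f^\star$. The cleaner route is to observe that, conditioned on $X_A=x_A$ and $X_B=x_B$, the same law governs $\hat{\mathbf p}_{x_A}$ regardless of whether $(x_A,x_B)$ came from the joint or the product distribution (the strategy only sees the private signal, not which task it is), and likewise for $\hat{\mathbf p}_{x_B}$; moreover $\hat{\mathbf p}_{x_A}$ and $\hat{\mathbf p}_{x_B}$ are conditionally independent given $(x_A,x_B)$ in the distinct-task case. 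So defining $\mathcal{G}$ to be the set of functions $(x_A,x_B)\mapsto \R$ that arise as $R$ composed with realizable (mixed-strategy-induced) report pairs, the expected total payment is of the form $\E_{\mathbf U} u - \E_{\mathbf V} f^\star(u)$ over $u$ ranging in a subset of all functions $\Sigma_A\times\Sigma_B\to\mathbb R$. Lemma~\ref{lemma:duality} then gives the upper bound $MI^f(X_A;X_B)$ and says equality holds iff $u(x_A,x_B)=u^*(x_A,x_B)\in\partial f(K(x_A,x_B))$ pointwise; since $f$ is differentiable this is $u(x_A,x_B)=f'(K(x_A,x_B))$. Unwinding the definition of $u$, this is exactly the claimed condition $R(\hat{\mathbf p}_{x_A^{\ell_1}}^{\ell_1},\hat{\mathbf p}_{x_B^{\ell_2}}^{\ell_2})=f'(K(x_A^{\ell_1},x_B^{\ell_2}))$ for all relevant $\ell_1,\ell_2$ and all realized signals, and it is achievable (by truth-telling together with Claim~\ref{claim:ci}), so the supremum is attained and equals $MI^f(X_A;X_B)$.

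The main obstacle I anticipate is handling the nonlinearity of $f^{\star}$ in the presence of mixed strategies: one must argue that allowing internal randomization in $s_A,s_B$ cannot help, i.e.\ that the sup over the enlarged function class $\mathcal G$ is still governed by Lemma~\ref{lemma:duality} and its equality condition still forces a \emph{deterministic} pointwise identity. This is where the conditional independence of $\hat{\mathbf p}_{x_A}$ and $\hat{\mathbf p}_{x_B}$ given $(x_A,x_B)$, together with convexity of $f^\star$ (so that $\E f^\star(\cdot)\ge f^\star(\E\,\cdot)$ only hurts the agent unless the report is a.s.\ constant given the signal), is needed: it shows that any randomized strategy is weakly dominated by its "averaged" deterministic counterpart evaluated at the optimal distinguisher, and that strict optimality pins down the report as a function of the signal alone. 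Once that reduction is in place, the rest is a direct invocation of Lemma~\ref{lemma:duality}, mirroring the proof of Lemma~\ref{lem:pplearn}.
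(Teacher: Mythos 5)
Your proposal is correct and takes essentially the same route as the paper: the paper likewise observes that under the a priori similar and random-order assumption each strategy acts as a (possibly random) predictor of the private signal, rewrites the expected payment as $\E_{\mathbf{U}_{X_A,X_B}} R - \E_{\mathbf{V}_{X_A,X_B}} f^{\star}(R)$, and invokes Lemma~\ref{lemma:duality} exactly as in the proof of Lemma~\ref{lem:pplearn}. Your extra Jensen/strict-convexity discussion of mixed strategies fills in a point the paper treats implicitly (it just says the strategy ``can be seen as a random predictor'' and reuses the earlier proof); it is sound since differentiability of $f$ rules out affine pieces of $f^{\star}$, and it can alternatively be bypassed by folding the agents' internal randomness into augmented signals $(X_A,W_A),(X_B,W_B)$, which changes neither $K$ nor $MI^f(X_A;X_B)$ and reduces everything to deterministic distinguishers.
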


The proofs of Lemma~\ref{lem:focal} and Theorem~\ref{thm:focal} are very similar with Lemma~\ref{lem:pplearn} and Theorem~\ref{thm:ppl}. We defer the formal proofs to the appendix.

\subsection{Single-task: strictly truthful forecast elicitation without verification}\label{sec:single}
This section introduces the strictly truthful mechanism in the single-task setting. If we know the realization $y$ of $Y$, we can simply apply a proper scoring rule and pay Alice and Bob $PS(y,\hat{\mathbf{p}}_{x_A})$ and $PS(y,\hat{\mathbf{p}}_{x_B})$ respectively. Then according to the property of the proper scoring rule, Alice and Bob will honestly report their truthful forecast to maximize their expected payment. However, we do not know the realization of $Y$. In the information elicitation without verification setting where Alice and Bob are required to report their information, \citet{MRZ05} propose the ``peer prediction'' idea, that is, pays Alice the accuracy of the forecast that predicts Bob's information conditioning Alice's information, i.e., $$PS\big(\hat{x}_B,(\Pr[{X}_B=x_B|\hat{x}_A])_y\big)$$ where $\hat{x}_A$ and $\hat{x}_B$ are Alice and Bob's reported information. We note the peer prediction mechanism in \citet{MRZ05} is truthful. With a similar ``peer prediction'' idea, we propose a strictly truthful mechanism in forecast elicitation.

\paragraph{Common ground mechanism} Given the prior distribution over $Y$, 
\begin{description} 
\item[Report] Alice and Bob are required to report $\mathbf{p}_{x_A}$, $\mathbf{p}_{x_B}$. We denote their actual reports by $\hat{\mathbf{p}}_{x_A}$ and $\hat{\mathbf{p}}_{x_B}$. 
\item[Payment] Both Alice and Bob are paid 
$$ \log \sum_{y}\frac{\hat{\mathbf{p}}_{x_A}(y) \hat{\mathbf{p}}_{x_B}(y)}{\Pr[Y=y]} .$$
\end{description}

\begin{theorem}
With the conditional independence assumption (and when the prior is stable), given the prior distribution over the $Y$, the common ground mechanism is (strictly) truthful;

moreover, when both Alice and Bob are honest, each of them's expected payment in the common ground mechanism is the Shannon mutual information between their private information $$I(X_A;X_B)=MI^{KL}(X_A;X_B).$$
\end{theorem}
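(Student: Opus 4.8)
The plan is to verify that the mechanism pays each agent an expected amount equal to a proper-scoring-rule score evaluated at the correct Bayesian posterior, so that truth-telling is a (strict) best response. Fix Alice's private signal $X_A = x_A$ and suppose Bob is truthful, reporting $\mathbf{p}_{x_B} = (\Pr[Y=y|X_B=x_B])_y$ for his signal. From Alice's perspective, with signal $x_A$, her posterior over Bob's signal is $\Pr[X_B = x_B | X_A = x_A]$. Her expected payment when she reports an arbitrary $\hat{\mathbf{p}}$ is
\begin{align*}
\E_{X_B | X_A = x_A}\left[\log \sum_y \frac{\hat{\mathbf{p}}(y)\, \mathbf{p}_{x_B}(y)}{\Pr[Y=y]}\right] = \sum_{x_B} \Pr[X_B = x_B | X_A = x_A] \log \sum_y \frac{\hat{\mathbf{p}}(y)\Pr[Y=y|X_B=x_B]}{\Pr[Y=y]}.
\end{align*}
The first step is to recognize the inner sum: by Claim~\ref{claim:ci}, $\sum_y \frac{\Pr[Y=y|X_A=x_A]\Pr[Y=y|X_B=x_B]}{\Pr[Y=y]} = K(X_A=x_A,X_B=x_B)$, and more generally $\sum_y \frac{\hat{\mathbf{p}}(y)\Pr[Y=y|X_B=x_B]}{\Pr[Y=y]}$ is a linear functional of $\hat{\mathbf{p}}$ that, when $\hat{\mathbf{p}} = \mathbf{p}_{x_A}$, equals $K(x_A,x_B) = \frac{\Pr[X_B = x_B|X_A=x_A]}{\Pr[X_B=x_B]}$.

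Next I would reorganize Alice's expected payment into a logarithmic-scoring-rule form over the variable $X_B$. Writing $q(x_B) := \sum_y \frac{\hat{\mathbf{p}}(y)\Pr[Y=y|X_B=x_B]}{\Pr[Y=y]} \cdot \Pr[X_B = x_B]$, note that when $\hat{\mathbf{p}} = \mathbf{p}_{x_A}$ we get $q(x_B) = \Pr[X_B = x_B | X_A = x_A]$, i.e.\ $q(\cdot)$ is exactly Alice's true posterior over Bob's signal; and $q$ is always a valid distribution over $\Sigma_B$ when $\hat{\mathbf p}\in\Delta_\Sigma$, since $\sum_{x_B} q(x_B) = \sum_y \hat{\mathbf p}(y)\sum_{x_B}\Pr[X_B=x_B|Y=y] = 1$. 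Then Alice's expected payment is $\sum_{x_B}\Pr[X_B=x_B|X_A=x_A]\log \frac{q(x_B)}{\Pr[X_B=x_B]} = LSR(\mathbf{r}, \mathbf{q}) - \mathrm{const}$, where $\mathbf{r}$ is Alice's true posterior over $X_B$ and the constant $\sum_{x_B}\Pr[X_B=x_B|X_A=x_A]\log\Pr[X_B=x_B]$ does not depend on her report. By the strict properness of $LSR$ (Example~\ref{eg:lsr}), this is maximized exactly when $\mathbf{q} = \mathbf{r}$, i.e.\ when $q(x_B) = \Pr[X_B = x_B|X_A=x_A]$ for all $x_B$. Truth-telling achieves this, so truth-telling is a best response; whether it is the \emph{unique} best response — i.e.\ whether strictness holds — is where stability enters: the map $\hat{\mathbf p}\mapsto q$ is the linear system underlying \eqref{soe} with Alice's row fixed, so by the stable-prior hypothesis the only $\hat{\mathbf p}$ producing the correct $q$ is $\hat{\mathbf p}=\mathbf p_{x_A}$. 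The symmetric argument handles Bob.

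Finally, for the payment value at the truthful profile, substitute $\hat{\mathbf{p}}_{x_A} = \mathbf{p}_{x_A}$ and $\hat{\mathbf{p}}_{x_B} = \mathbf{p}_{x_B}$: Alice's payment on signal realization $(x_A,x_B)$ is $\log K(x_A,x_B)$, so her expected payment is $\E_{(X_A,X_B)}[\log K(X_A,X_B)] = \sum_{x_A,x_B}\Pr[X_A=x_A,X_B=x_B]\log\frac{\Pr[X_A=x_A,X_B=x_B]}{\Pr[X_A=x_A]\Pr[X_B=x_B]}$, which is by definition $D_{KL}(\mathbf{U}_{X_A,X_B},\mathbf{V}_{X_A,X_B}) = MI^{KL}(X_A;X_B) = I(X_A;X_B)$.

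The main obstacle I anticipate is the strictness claim: showing the best response is \emph{strict} requires that no report other than the truthful one induces the same distribution $q(\cdot)$ over Bob's signal, which is precisely why the theorem invokes the stable-prior condition; I would need to phrase the argument so that a deviating report $\hat{\mathbf p}\ne\mathbf p_{x_A}$ that is nonetheless payment-optimal would, together with Bob's true posteriors and $\mathbf r=\Pr[Y]$, constitute a second solution of \eqref{soe} with Alice's coordinates changed, contradicting stability (and noting that the ``strictness'' is what the parenthetical ``when the prior is stable'' is there to buy). The rest is bookkeeping with Claim~\ref{claim:ci} and the properness of $LSR$.
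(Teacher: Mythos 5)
Your proposal is correct and follows essentially the same route as the paper's own proof: rewrite Alice's conditional expected payment (against a truthful Bob) as a log-scoring-rule score of the induced forecast $q(x_B)=\sum_y \hat{\mathbf p}(y)\mathbf p_{x_B}(y)\Pr[X_B=x_B]/\Pr[Y=y]$ over Bob's signal, use Claim~\ref{claim:ci} plus properness of $LSR$ for truthfulness and the value $MI^{KL}(X_A;X_B)$, and invoke stability of the prior to rule out any other report inducing the correct $q$, giving strictness. The only nitpick is the phrase ``with Alice's row fixed'' (it is Bob's posteriors and $\mathbf r$ that are fixed, with Alice's coordinate varying), which you state correctly in your final paragraph.
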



The non-negativity of the Shannon mutual information implies that agents are willing to participate in the mechanism. The (strictly) truthful property of the common ground mechanism is proved by the fact that log scoring rule $LSR$ is strictly proper. 

\begin{proof}
When both Alice and Bob are honest, their payment is $\log K(x_A,x_B)$ according to Claim~\ref{claim:ci}. Their expected payment will be 
\begin{align*}
   \sum_{x_A,x_B}\Pr[x_A,x_B] \log K(x_A,x_B) = \sum_{x_A,x_B}\Pr[x_A,x_B] \log \frac{\Pr[x_A,x_B]}{\Pr[x_A]\Pr[x_B]}=MI^{KL}(X_A;X_B)
\end{align*}

Given that Bob honestly reports $\hat{\mathbf{p}}_{x_B}={\mathbf{p}}_{x_B}$, we would like to show that the expected payment of Alice is less than $MI^{KL}(X_A; X_B)$ regardless of the strategy Alice plays. The expected payment of Alice is 
\begin{align*}
     &\sum_{x_A,x_B} \Pr[X_A=x_A,X_B=x_B]\log \sum_{y}\frac{\hat{\mathbf{p}}_{x_A}(y) {\mathbf{p}}_{x_B}(y)}{\Pr[Y=y]}\\
     =&\sum_{x_A,x_B} \Pr[X_A=x_A,X_B=x_B]\log \sum_{y}\frac{\hat{\mathbf{p}}_{x_A}(y) {\mathbf{p}}_{x_B}(y)}{\Pr[Y=y]} \Pr[X_B=x_B]\\
     &-\sum_{x_A,x_B} \Pr[X_A=x_A,X_B=x_B]\log \Pr[X_B=x_B]\\ \tag{$C$ is a constant that does not depend on Alice's strategy}
     =&\sum_{x_A,x_B} \Pr[X_A=x_A,X_B=x_B]\log \sum_{y}\frac{\hat{\mathbf{p}}_{x_A}(y) {\mathbf{p}}_{x_B}(y)}{\Pr[Y=y]} \Pr[X_B=x_B]-C\\
     =& \sum_{x_A,x_B} \Pr[X_A=x_A]\Pr[X_B=x_B|X_A=x_A]\log \sum_{y}\frac{\hat{\mathbf{p}}_{x_A}(y) {\mathbf{p}}_{x_B}(y)}{\Pr[Y=y]} \Pr[X_B=x_B]-C
\end{align*}

Moreover, fixing $X_A=x_A$

\begin{align*}
    &\sum_{x_B}\sum_{y}\frac{\hat{\mathbf{p}}_{x_A}(y) {\mathbf{p}}_{x_B}(y)}{\Pr[Y=y]} \Pr[X_B=x_B]\\
    =& \sum_{x_B}\sum_{y}\frac{\hat{\mathbf{p}}_{x_A}(y) \Pr[X_B=x_B,Y=y]}{\Pr[Y=y]} \\
    =& \sum_{x_B}\sum_{y}{\hat{\mathbf{p}}_{x_A}(y) \Pr[X_B=x_B|Y=y]} \\
    =& \sum_{y}\hat{\mathbf{p}}_{x_A}(y)=1
\end{align*}

Thus, $\sum_{y}\frac{\hat{\mathbf{p}}_{x_A}(y) {\mathbf{p}}_{x_B}(y)}{\Pr[Y=y]} \Pr[X_B=x_B]$ can be seen as a forecast for $X_B=x_B$. Since $LSR(\mathbf{p},\mathbf{q})=\sum_{\sigma}\mathbf{p}(\sigma)\log \mathbf{q}(\sigma)\leq \sum_{\sigma}\mathbf{p}(\sigma)\log \mathbf{p}(\sigma)=LSR(\mathbf{p},\mathbf{p})$ for any $\mathbf{p},\mathbf{q}\in\Delta_{\Sigma}$, we have

    \begin{align*} \numberthis \label{eq:truthful}
        & \sum_{x_A,x_B} \Pr[X_A=x_A]\Pr[X_B=x_B|X_A=x_A]\log \sum_{y}\frac{\hat{\mathbf{p}}_{x_A}(y) {\mathbf{p}}_{x_B}(y)}{\Pr[Y=y]} \Pr[X_B=x_B]-C\\ 
        \leq & \sum_{x_A,x_B} \Pr[X_A=x_A]\Pr[X_B=x_B|X_A=x_A]\log \Pr[X_B=x_B|X_A=x_A]-C\\
        =& \sum_{x_A,x_B} \Pr[X_A=x_A]\Pr[X_B=x_B|X_A=x_A]\log \Pr[X_B=x_B|X_A=x_A]\\
        &-\sum_{x_A,x_B} \Pr[X_A=x_A,X_B=x_B]\log \Pr[X_B=x_B]\\
        &= \sum_{x_A,x_B} \Pr[X_A=x_A,X_B=x_B]\log \frac{\Pr[X_B=x_B|X_A=x_A]}{\Pr[X_B=x_B]}\\
        =&I(X_A;X_B)
    \end{align*}
    
The non-negativity of the Shannon mutual information implies that agents are willing to participate in the mechanism.

It remains to analyze the strictness of the truthfulness. We need to show for any $x_A$, given that Alice receives $X_A=x_A$, she will obtain strictly less payment via reporting $\hat{\mathbf{p}}_{x_A}\neq \mathbf{p}_{x_A}$. 

Given that Alice receives $X_A=x_A$, her expected payment is 
\begin{align*} \tag{see equation (\ref{eq:truthful})}
    & \sum_{x_B} \Pr[X_B=x_B|X_A=x_A]\log \sum_{y}\frac{\hat{\mathbf{p}}_{x_A}(y) {\mathbf{p}}_{x_B}(y)}{\Pr[Y=y]} \Pr[X_B=x_B]-C\\ \numberthis \label{eq:strict}
    \leq & \sum_{x_B}\Pr[X_B=x_B|X_A=x_A]\log \Pr[X_B=x_B|X_A=x_A]-C
\end{align*}  

Note that $\sum_{\sigma}\mathbf{p}(\sigma)\log \mathbf{q}(\sigma)< \sum_{\sigma}\mathbf{p}(\sigma)\log \mathbf{p}(\sigma)$ when $\mathbf{q}\neq \mathbf{p}$. When the prior is stable, since $\hat{\mathbf{p}}_{x_A}\neq \mathbf{p}_{x_A}$, then $\mathbf{p}_{x_B},\hat{\mathbf{p}}_{x_A},(\Pr[Y=y])_y$ is not the solution of system (\ref{soe}). This implies that there exists $x_B$ such that 
$$\Pr[X_B=x_B|X_A=x_A]\neq \sum_{y}\frac{\hat{\mathbf{p}}_{x_A}(y) {\mathbf{p}}_{x_B}(y)}{\Pr[Y=y]} \Pr[X_B=x_B]. $$ Thus, the inequality (\ref{eq:strict}) must be strict. Therefore, when the prior is stable, the common ground mechanism is strictly truthful.

\end{proof}

\section{$PS$-gain}\label{sec:psgain}
In this section, we will extend the maximum likelihood estimator method in \citet{raykar2010learning} to a general family of optimization goals---$PS$-gain and compare the general family with our $f$-mutual information gain. We will see the application of $PS$-gain requires either one of the information sources to be low dimensional or that we have a simple generative model for the distribution over one of the information sources and ground truth label. Thus, the range of applications of $PS$-gain is more limited compared with the range of applications of $f$-mutual information gain. 

In \citet{raykar2010learning}, $X_A$ is a feature vector which has multiple crowdsourced labels $X_B$. We have access to $(x_A^{\ell},x_B^{\ell})_{\ell\in\mathcal{L}}$ which are i.i.d samples of $(X_A,X_B)$. \citet{raykar2010learning} also have the conditional independence assumption. 

\subsection{Maximum likelihood estimator (MLE)}


Let $\Theta_A,\Theta_B$ be two parameters that control the distribution over $X_A$ and $Y$ and the distribution over $X_B$ and $Y$ respectively. 

With the conditional independence assumption, we have 
\begin{align*}
   \log \Pr[(x_A^{\ell},x_B^{\ell})_{\ell\in\mathcal{L}}|\Theta_A,\Theta_B]=&\log \Pi_{\ell\in\mathcal{L}} \Pr[X_B=x_B^{\ell}|X_A=x_A^{\ell},\Theta_A,\Theta_B]\\
   =&\log \Pi_{\ell\in\mathcal{L}} \sum_y \Pr[X_B=x_B^{\ell}|Y=y,\Theta_B]\Pr[Y=y|X_A=x_A^{\ell},\Theta_A]\\
   =& \sum_{\ell\in\mathcal{L}}\log\bigg(\sum_y \Pr[X_B=x_B^{\ell}|Y=y,\Theta_B]\Pr[Y=y|X_A=x_A^{\ell},\Theta_A]\bigg)\\
\end{align*}





The MLE is a pair of parameters $\Theta_A^*,\Theta_B^*$ that maximizes the expected $$\log \Pr[(x_A^{\ell},x_B^{\ell})_{\ell\in\mathcal{L}}|\Theta_A,\Theta_B]=\sum_{\ell\in\mathcal{L}}\log\bigg(\sum_y \Pr[X_B=x_B^{\ell}|Y=y,\Theta_B]\Pr[Y=y|X_A=x_A^{\ell},\Theta_A]\bigg).$$ \citet{raykar2010learning} use the MLE to estimate the parameters. In order to compare this MLE method with our $f$-mutual information gain framework, we map this MLE method into our language and provide a theoretical analysis for the condition when MLE is meaningful.

\paragraph{$LSR$-gain/MLE}

\begin{description}
\item[Hypothesis] We are given $\mathcal{H}_A=\{h_A:\Sigma_A\mapsto \Delta_{\Sigma}\}$, $\mathcal{V}_B=\{v_B:\Sigma_B\mapsto [0,1]^{|\Sigma|}\}$: the set of hypotheses candidates for $X_A$ and  $X_B$, respectively. Note that $v_B$ maps $x_B\in\Sigma_B$ into a vector in $[0,1]^{|\Sigma|}$ rather than a distribution vector. 

\item[Gain] 

We see $$(v_B(x_B) \cdot h_A(x_A^{\ell}))_{x_B}$$ as a forecast  for random variable $X_B$ conditioning on $X_A=x_A^{\ell}$ and we reward the hypotheses $LSR$-gain---the accuracy of this forecast via log scoring rule (LSR):

\begin{align*}
    \sum_{\ell\in\mathcal{L}}LSR\bigg(x_B^{\ell}, ( v_B(x_B) \cdot h_A(x_A^{\ell}))_{x_B}\bigg)
    =\sum_{\ell\in\mathcal{L}}\log\bigg( v_B(x_B^{\ell}) \cdot h_A(x_A^{\ell})\bigg)
\end{align*}
\end{description}

We use $\mathbf{v}\cdot \mathbf{v}'$ to represent the dot product between two vectors.

Note that by picking $\mathcal{H}_A$ as the set of mappings---associated with a set of parameters $\{\Theta_A\}$---that map $X_A=x_A$ to $(\Pr[Y=y|X_A=x_A^{\ell},\Theta_A])_y$ and picking $\mathcal{V}_B$ as the set of mappings---associated with a set of parameters $\{\Theta_B\}$---that map $X_B=x_B$ to $(\Pr[X_B=x_B|Y=y,\Theta_B])_y$, maximizing $LSR$-gain is equivalent to obtaining MLE. 

The idea of $LSR$-gain is very similar with the original peer prediction idea introduced in Section~\ref{sec:single} as well as our common ground mechanism. 

\begin{theorem}\label{thm:mle}
When $\sum_{x_B\in\Sigma_B} v_B(x_B)=(1,1,..,1)$ for all $v_B\in\mathcal{V}_B$, the ground truth $Y$ corresponds to a maximizer of $LSR$-gain: $$v_B^*(x_B)=(\Pr[X_B=x_B|Y=y])_y\qquad h_A^*(x_A)=(\Pr[Y=y|X_A=x_A])_y.$$ The maximum is the conditional Shannon entropy $H(X_B|X_A)$.
\end{theorem}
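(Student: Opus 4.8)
The plan is to show that the expected $LSR$-gain, as a function over pairs $(v_B, h_A)$ subject to the normalization constraint $\sum_{x_B} v_B(x_B) = (1,\dots,1)$, is an expected log-scoring-rule payoff for forecasting $X_B$ from $X_A$, and is therefore maximized exactly when the forecast $(v_B(x_B)\cdot h_A(x_A))_{x_B}$ equals the true conditional distribution $\Pr[X_B=x_B \mid X_A=x_A]$. First I would compute the expectation of the per-sample term: since $(x_A^\ell, x_B^\ell)_\ell$ are i.i.d.\ realizations of $(X_A,X_B)$, the expected gain equals $\E_{(X_A,X_B)}\bigl[\log\bigl(v_B(X_B)\cdot h_A(X_A)\bigr)\bigr] = \sum_{x_A} \Pr[X_A=x_A]\sum_{x_B}\Pr[X_B=x_B\mid X_A=x_A]\log\bigl(v_B(x_B)\cdot h_A(x_A)\bigr)$.

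Next I would verify that, for each fixed $x_A$, the vector $\bigl(v_B(x_B)\cdot h_A(x_A)\bigr)_{x_B\in\Sigma_B}$ is a genuine probability distribution over $\Sigma_B$ whenever $h_A(x_A)\in\Delta_\Sigma$ and the constraint $\sum_{x_B}v_B(x_B)=(1,\dots,1)$ holds: nonnegativity is immediate since entries of $v_B(x_B)$ lie in $[0,1]$ and $h_A(x_A)$ is a nonnegative vector, and the sum over $x_B$ telescopes to $\bigl(\sum_{x_B}v_B(x_B)\bigr)\cdot h_A(x_A) = (1,\dots,1)\cdot h_A(x_A) = 1$. Hence, for each $x_A$, the inner sum is exactly $LSR\bigl(\mathbf{q}_{x_A}, \widehat{\mathbf{q}}_{x_A}\bigr)$ where $\mathbf{q}_{x_A} = (\Pr[X_B=x_B\mid X_A=x_A])_{x_B}$ is the true conditional and $\widehat{\mathbf{q}}_{x_A} = (v_B(x_B)\cdot h_A(x_A))_{x_B}$ is the reported one. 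By the properness of $LSR$ (Example~\ref{eg:lsr}), this is maximized (over all choices of the reported distribution) when $\widehat{\mathbf{q}}_{x_A} = \mathbf{q}_{x_A}$, and taking the $\Pr[X_A=x_A]$-weighted sum, the full expected gain is maximized when $\widehat{\mathbf{q}}_{x_A}=\mathbf{q}_{x_A}$ for every $x_A$ with $\Pr[X_A=x_A]>0$.

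Then I would exhibit the claimed maximizer: setting $h_A^*(x_A) = (\Pr[Y=y\mid X_A=x_A])_y$ and $v_B^*(x_B) = (\Pr[X_B=x_B\mid Y=y])_y$, the constraint holds because $\sum_{x_B}\Pr[X_B=x_B\mid Y=y] = 1$ for each $y$, and by the conditional independence assumption (Assumption~\ref{assume:coni}), $v_B^*(x_B)\cdot h_A^*(x_A) = \sum_y \Pr[X_B=x_B\mid Y=y]\Pr[Y=y\mid X_A=x_A] = \Pr[X_B=x_B\mid X_A=x_A]$, which is precisely $\mathbf{q}_{x_A}$. So this pair realizes the per-$x_A$ optimum simultaneously for all $x_A$, hence it is a global maximizer. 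Finally, plugging $\widehat{\mathbf{q}}_{x_A}=\mathbf{q}_{x_A}$ back in, the maximum value equals $\sum_{x_A}\Pr[X_A=x_A]\sum_{x_B}\Pr[X_B=x_B\mid X_A=x_A]\log\Pr[X_B=x_B\mid X_A=x_A] = -H(X_B\mid X_A)$; I should double-check the sign convention used in the paper, since the statement asserts the maximum "is the conditional Shannon entropy $H(X_B|X_A)$" (up to sign this is the standard negative conditional entropy, and I would match whatever convention the paper adopts elsewhere).

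The main obstacle — really the only subtle point — is the argument that the constraint $\sum_{x_B}v_B(x_B) = (1,\dots,1)$ is exactly what makes $(v_B(x_B)\cdot h_A(x_A))_{x_B}$ a valid distribution, which is what licenses the appeal to properness of $LSR$; without this constraint the "forecast" need not normalize and the scoring-rule argument breaks. A secondary point worth care is that this theorem only claims the ground truth is \emph{a} maximizer (not the unique one up to permutation) — consistent with the discussion that $LSR$-gain/MLE lacks the strong identifiability guarantees of $MIG^f$ — so no well-definedness or stability hypothesis is needed and I would not attempt to prove uniqueness here.
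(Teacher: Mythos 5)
Your proof is correct and follows essentially the same route as the paper's: rewrite the expected gain as an expected log score for forecasting $X_B$ given $X_A$, use the normalization constraint $\sum_{x_B}v_B(x_B)=(1,\dots,1)$ to verify that $(v_B(x_B)\cdot h_A(x_A))_{x_B}$ is a valid distribution, invoke $LSR(\mathbf{p},\mathbf{q})\leq LSR(\mathbf{p},\mathbf{p})$, and exhibit the claimed maximizer via the conditional independence identity $v_B^*(x_B)\cdot h_A^*(x_A)=\Pr[X_B=x_B\mid X_A=x_A]$. Your sign caveat is well placed: the paper's own proof evaluates the maximum as $\sum_{x_A,x_B}\Pr[x_A]\Pr[x_B\mid x_A]\log\Pr[x_B\mid x_A]$ and labels it $H(X_B|X_A)$, which under the standard convention is $-H(X_B|X_A)$, exactly as you computed.
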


\begin{remark}
Note that without the restriction: $\sum_{x_B\in\Sigma_B} v_B(x_B)=(1,1,..,1)$ for all $v_B\in\mathcal{V}_B$, $$v_B^*(x_B)=(\Pr[X_B=x_B|Y=y])_y\qquad h_A^*(x_A)=(\Pr[Y=y|X_A=x_A])_y$$ is not a maximizer and we will have a meaningless maximizer $v_B(x_B)=(1,1,..,1),\forall x_B$ and $h_A(x_A)=(1,0,...,0),\forall x_A$. 
\end{remark}

By picking $\mathcal{V}_B$ as the set of mappings---associated with a set of parameters $\{\Theta_B\}$---that map $X_B=x_B$ to $(\Pr[X_B=x_B|Y=y,\Theta_B])_y$, the restriction $\sum_{x_B\in\Sigma_B} v_B(x_B)=(1,1,..,1)$ for all $v_B\in\mathcal{V}_B$ satisfies naturally. However, it requires the knowledge of the generative distribution model over $X_B$ and $Y$ with parameter $\Theta_B$. \citet{raykar2010learning} assume a simple distribution model between $X_B$ and $Y$ with parameter $\Theta_B$---conditioning the ground truth label, the crowdsourced feedback $X_B$ is drawn from a binomial distribution, such that $\Pr[X_B=x_B|Y=y,\Theta_B]$ has a simple explicit form.

\begin{proof}[Proof of Theorem~\ref{thm:mle}]
    \begin{align*}
        &\E\sum_{\ell\in\mathcal{L}}\log\bigg(v_B(x_B^{\ell}) \cdot h_A(x_A^{\ell})\bigg)\\
        =&\sum_{x_A\in\Sigma_A,x_B\in\Sigma_B}\Pr[X_A=x_A,X_B=x_B] \log \bigg(v_B(x_B) \cdot h_A(x_A)\bigg)\\
        =&\sum_{x_A\in\Sigma_A,x_B\in\Sigma_B}\Pr[X_A=x_A]\Pr[X_B=x_B|X_A=x_A] \log \bigg(v_B(x_B) \cdot h_A(x_A)\bigg)\\
        =& \sum_{x_A\in\Sigma_A,x_B\in\Sigma_B}\Pr[X_A=x_A]LSR\bigg(( \Pr[X_B=x_B|X_A=x_A])_{x_B},(v_B(x_B) \cdot h_A(x_A))_{x_B}\bigg)
    \end{align*}

Fixing $X_A=x_A$, since $\sum_{x_B\in\Sigma_B} v_B(x_B)=(1,1,...,1)$ for all $v_B\in\mathcal{V}_B$, we have

\begin{align*}
   \sum_{x_B} \bigg(v_B(x_B) \cdot h_A(x_A)\bigg)=\sum_y h_A(x_A)(y)=1
\end{align*}
Since $LSR(\mathbf{p},\mathbf{q})\leq LSR(\mathbf{p},\mathbf{p})$ for any $\mathbf{p},\mathbf{q}\in\Delta_{\Sigma}$, we have

\begin{align*}
    &\E\sum_{\ell\in\mathcal{L}}\log\bigg(v_B(x_B) \cdot h_A(x_A)\bigg)\\
    =&\sum_{x_A\in\Sigma_A,x_B\in\Sigma_B}\Pr[X_A=x_A]LSR\bigg(( \Pr[X_B=x_B|X_A=x_A])_{x_B},(v_B(x_B) \cdot h_A(x_A))_{x_B}\bigg)\\
    \leq&\sum_{x_A\in\Sigma_A,x_B\in\Sigma_B}\Pr[X_A=x_A]LSR\bigg(( \Pr[X_B=x_B|X_A=x_A])_{x_B},( \Pr[X_B=x_B|X_A=x_A])_{x_B}\bigg)\\
    =& \sum_{x_A\in\Sigma_A,x_B\in\Sigma_B}\Pr[X_A=x_A]\Pr[X_B=x_B|X_A=x_A] \log \Pr[X_B=x_B|X_A=x_A]\\
    =& H(X_B|X_A)\\ \tag{conditional independence}
    =&\sum_{x_A\in\Sigma_A,x_B\in\Sigma_B}\Pr[X_A=x_A]\Pr[X_B=x_B|X_A=x_A] \log\bigg(\sum_y \Pr[X_B=x_B|Y=y]\Pr[Y=y|X_A=x_A]\bigg)
\end{align*}

Thus, $$v_B^*(x_B)=(\Pr[X_B=x_B|Y=y])_y\qquad h_A^*(x_A)=(\Pr[Y=y|X_A=x_A])_y$$ is a maximizer and the maximum is the conditional Shannon entropy $H(X_B|X_A)$.

\end{proof}

\subsection{Extending $LSR$-gain to $PS$-gain}

The property $LSR(\mathbf{p},\mathbf{q})=\sum_{\sigma}\mathbf{p}(\sigma)\log \mathbf{q}(\sigma)\leq \sum_{\sigma}\mathbf{p}(\sigma)\log \mathbf{p}(\sigma)=LSR(\mathbf{p},\mathbf{p})$ for any $\mathbf{p},\mathbf{q}\in\Delta_{\Sigma}$ is also valid for all proper scoring rules. Thus, we can naturally extend the MLE to $PS$-gain by replacing the $LSR$ by any given proper scoring rule $PS$. 

\paragraph{$PS$-gain}
\begin{description}
\item[Hypothesis] We are given $\mathcal{H}_A=\{h_A:\Sigma_A\mapsto \Delta_{\Sigma}\}$, $\mathcal{V}_B=\{v_B:\Sigma_B\mapsto [0,1]^{|\Sigma|}\}$: the set of hypotheses candidates for $X_A$ and  $X_B$, respectively. 

\item[Gain] 

We see $$(v_B(x_B) \cdot h_A(x_A^{\ell}))_{x_B}$$ as a forecast  for random variable $X_B$ conditioning on $X_A=x_A^{\ell}$ and we reward the hypotheses $PS$-gain---the accuracy of this forecast via a given proper scoring rule $PS$:

\begin{align*}
    \sum_{\ell\in\mathcal{L}}PS\bigg(x_B^{\ell}, ( v_B(x_B) \cdot h_A(x_A^{\ell}))_{x_B}\bigg)
\end{align*}
\end{description}

Note that the general $PS$-gain may involve the calculations of $( v_B(x_B) \cdot h_A(x_A^{\ell}))_{x_B}$ while $LSR$-gain only requires the value of $v_B(x_B^{\ell}) \cdot h_A(x_A^{\ell})$. Thus, unlike $LSR$-gain, the general $PS$-gain may be only applicable for low dimensional $X_B$, even if we assume a simple generative distribution model over $X_B$ and $Y$.

\begin{theorem}
Given a proper scoring rule $PS$, when $\sum_{x_B\in\Sigma_B} v_B(x_B)=(1,1,...,1)$ for all $v_B\in\mathcal{V}_B$, the ground truth $Y$ corresponds to a $PS$-gain maximizer: $$v_B^*(x_B)=(\Pr[X_B=x_B|Y=y])_y\qquad h_A^*(x_A)=(\Pr[Y=y|X_A=x_A])_y.$$ 
\end{theorem}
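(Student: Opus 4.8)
The plan is to imitate the proof of Theorem~\ref{thm:mle} line by line, replacing the logarithmic scoring rule by the given proper scoring rule $PS$ and invoking the generic defining inequality $PS(\mathbf{p},\mathbf{q})\le PS(\mathbf{p},\mathbf{p})$ (for all $\mathbf{p},\mathbf{q}\in\Delta_{\Sigma_B}$) in place of the $LSR$-specific one. First I would rewrite the expected $PS$-gain: since $(x_A^\ell,x_B^\ell)_\ell$ are i.i.d.\ realizations of $(X_A,X_B)$,
\[
\E\sum_{\ell\in\mathcal{L}}PS\big(x_B^{\ell},(v_B(x_B)\cdot h_A(x_A^{\ell}))_{x_B}\big)=|\mathcal{L}|\sum_{x_A}\Pr[X_A=x_A]\sum_{x_B}\Pr[X_B=x_B|X_A=x_A]\,PS\big(x_B,(v_B(x_B')\cdot h_A(x_A))_{x_B'}\big).
\]
Using that every proper scoring rule is linear in its first argument (Section~\ref{sec:prelim}), the inner sum over $x_B$ collapses to $PS\big((\Pr[X_B=x_B|X_A=x_A])_{x_B},\,(v_B(x_B)\cdot h_A(x_A))_{x_B}\big)$, so the objective equals $|\mathcal{L}|\sum_{x_A}\Pr[X_A=x_A]$ times that quantity, which is exactly the shape appearing in the proof of Theorem~\ref{thm:mle}.

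The one point that needs checking --- and the only place the hypothesis $\sum_{x_B\in\Sigma_B}v_B(x_B)=(1,\dots,1)$ is used --- is that $(v_B(x_B)\cdot h_A(x_A))_{x_B}$ really is an element of $\Delta_{\Sigma_B}$, so that the proper scoring rule property applies. Nonnegativity is immediate from $v_B(x_B)\in[0,1]^{|\Sigma|}$ and $h_A(x_A)\in\Delta_\Sigma$, and the summation constraint gives $\sum_{x_B}v_B(x_B)\cdot h_A(x_A)=(1,\dots,1)\cdot h_A(x_A)=\sum_y h_A(x_A)(y)=1$. Granting this, for each fixed $x_A$ the definition of a proper scoring rule yields
\[
PS\big((\Pr[X_B=x_B|X_A=x_A])_{x_B},(v_B(x_B)\cdot h_A(x_A))_{x_B}\big)\le PS\big((\Pr[X_B=x_B|X_A=x_A])_{x_B},(\Pr[X_B=x_B|X_A=x_A])_{x_B}\big),
\]
so the expected $PS$-gain is at most $|\mathcal{L}|\sum_{x_A}\Pr[X_A=x_A]\,PS\big((\Pr[X_B=x_B|X_A=x_A])_{x_B},(\Pr[X_B=x_B|X_A=x_A])_{x_B}\big)$, a quantity that does not depend on $h_A$ or $v_B$.

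Finally I would verify that the proposed pair attains this bound and is admissible. For $v_B^*(x_B)=(\Pr[X_B=x_B|Y=y])_y$ and $h_A^*(x_A)=(\Pr[Y=y|X_A=x_A])_y$, the conditional independence assumption (the same manipulation as in Claim~\ref{claim:ci}) gives $v_B^*(x_B)\cdot h_A^*(x_A)=\sum_y\Pr[X_B=x_B|Y=y]\Pr[Y=y|X_A=x_A]=\Pr[X_B=x_B|X_A=x_A]$, so the reported forecast coincides with the true conditional distribution for every $x_A$ and the inequality above holds with equality; moreover $\sum_{x_B}v_B^*(x_B)=\big(\sum_{x_B}\Pr[X_B=x_B|Y=y]\big)_y=(1,\dots,1)$, so $v_B^*\in\mathcal{V}_B$ satisfies the constraint. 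Hence $(h_A^*,v_B^*)$ is a $PS$-gain maximizer. I expect no genuine obstacle beyond this small bookkeeping that turns $(v_B(x_B)\cdot h_A(x_A))_{x_B}$ into a distribution; note, as with Theorem~\ref{thm:mle}, that the argument only exhibits \emph{a} maximizer --- since $PS$ is merely strictly proper there is no analogue of the ``$f'$ invertible'' hypothesis of Theorem~\ref{thm:ppl}, so we cannot conclude uniqueness --- and that the maximum value is only the corresponding conditional generalized entropy $\sum_{x_A}\Pr[X_A=x_A]\,PS\big((\Pr[X_B=x_B|X_A=x_A])_{x_B},(\Pr[X_B=x_B|X_A=x_A])_{x_B}\big)$, which reduces to $H(X_B|X_A)$ only for $LSR$.
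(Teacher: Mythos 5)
Your proposal is correct and follows essentially the same route as the paper, which simply reruns the proof of Theorem~\ref{thm:mle} with the generic properness inequality $PS(\mathbf{p},\mathbf{q})\le PS(\mathbf{p},\mathbf{p})$ in place of the $LSR$-specific one; your explicit checks (that the constraint makes $(v_B(x_B)\cdot h_A(x_A))_{x_B}$ a distribution, and that conditional independence makes the candidate pair attain the bound) are exactly the steps the paper's Theorem~\ref{thm:mle} argument relies on. Your closing remarks that only \emph{a} maximizer is exhibited and that the optimum is the $PS$-induced generalized conditional entropy are accurate and consistent with the statement.
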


The proof is the same with Theorem~\ref{thm:mle} except that we replace $LSR(\mathbf{p},\mathbf{q})\leq LSR(\mathbf{p},\mathbf{p})$ by $PS(\mathbf{p},\mathbf{q})\leq PS(\mathbf{p},\mathbf{p})$ for any $\mathbf{p},\mathbf{q}\in\Delta_{\Sigma}$.


\subsection{Comparing $PS$-gain with $f$-mutual information gain}\label{sec:comparison}

Generally, $f$-mutual information gain can be applied to a more general setting.  

$PS$-gain requires the restriction $\sum_{x_B\in\Sigma_B} v_B(x_B)=(1,1,...,1)$ for all $v_B\in\mathcal{V}_B$. Thus, $PS$-gain requires the full knowledge of $v_B$ for all $v_B\in\mathcal{V}_B$ to check whether it satisfies the restriction, while for the $f$-mutual information gain, it is sufficient to just have the access to the outputs of the hypothesis: $\{h_B(x_B^{\ell})\}_{\ell\in\mathcal{L}_B}$. Therefore, in the mechanism design part, we can only use $f$-mutual information gain to design focal mechanisms since we only have the outputs from agents. 

Moreover, $\sum_{x_B\in\Sigma_B} v_B(x_B)=(1,1,...,1)$ is also hard to check when $|\Sigma_B|$ is very large. For example, when $x_B$ is a $100\times 100$ black-and-white image, $|\Sigma_B|=2^{100}$ and checking $\sum_{x_B\in\Sigma_B} v_B(x_B)=(1,1,...,1)$ requires $2^{100}$ time. Normalizing $v_B$ such that it satisfies the condition also requires $2^{100}$ time. Thus, when $|\Sigma_B|$ is very large, we need a simple generative distribution model between $X_B$ and $Y$ with parameter $\Theta_B$ such that we can pick $\mathcal{V}_B$ as the set of mappings---associated with a set of parameters $\{\Theta_B\}$---that map $X_B=x_B$ to $(\Pr[X_B=x_B|Y=y,\Theta_B])_y$, to make the restriction $\sum_{x_B\in\Sigma_B} v_B(x_B)=(1,1,..,1)$ for all $v_B\in\mathcal{V}_B$ satisfy naturally. When we have the simple generative distribution model, we can use $LSR$-gain. The general $PS$-gain involves the calculations of the $|\Sigma_B|$ dimensional vector---$(v_B(x_B) \cdot h_A(x_A^{\ell}))_{x_B}$---for each $x_A^{\ell}$. Thus, the general $PS$-gain is only applicable to low dimensional $X_B$.

In the learning with noisy labels problem, the distribution between $X_B$ and $Y$ can be represented by a simple transition matrix and $X_B$ is low dimensional. Therefore, both $PS$-gain and $f$-mutual information gain can be applied to the learning with noisy labels problem.

Therefore, the application of $PS$-gain requires either one of the information sources to be low dimensional or that we have a simple generative model for the distribution over one of the information sources and ground truth label, while $f$-mutual information gain does not have the restrictions.



\section{Conclusion and discussion}\label{sec:discussions} 
We build a natural connection between mechanism design and machine learning by addressing two related problems: (1) co-training: learning to forecast ground truth using two conditionally independent sources, without access to labeled data; (2) forecast elicitation: eliciting high quality forecasts from the crowds without verification, by the same information theoretic approach. 

For the co-training problem, as usual in the related literature, we reduce the problem to an optimization problem and do not investigate the computation complexity or the data requirements. To implement our $f$-mutual information gain framework in practice, we implicitly assume that for high dimensional $X_A,X_B$, there exists a trainable set of hypotheses (e.g. neural networks) that is sufficiently rich to contain the Bayesian posterior predictor but not everything to cause over-fitting. The most apparent empirical direction will be running experiments on real data by training two neural networks to test our algorithms. Interesting theoretic directions include the analysis of the Bayesian risk and the influence of the choice of the convex function $f$ on the convergence rate.

For forecast elicitation, the most apparent direction will be performing real-world experiments. To apply our mechanisms, we do not need that every two agents' information is conditionally independent. In fact, for each agent, we only need to find a single reference agent for her such that the reference agent's information is conditionally independent of hers. Then we can run our mechanisms on the agent and her reference agent. In practice, we can pair the agents with some side information and make sure each pair of agents' information is conditionally independent. \gs{Can we cite paper 4 here?}

Another interesting direction is to ensure fairness, in particular, that agents are not incentivized to coordinate on stereotypes. One solution, is suppressing information from some of the agents and using our framework. However, when this is not possible, the prior peer prediction work on cheap signals~\cite{2016arXiv160607042G,2018arXiv180208312K} may be helpful in addressing this issue.  

\section*{Acknowledgement}
We thank Clayton Scott for useful conversations. 

\bibliographystyle{ACM-Reference-Format}
\bibliography{ref}


\appendix 

\section{Additional proof(s)}

{
\renewcommand{\thetheorem}{\ref{claim:ci}}

\begin{claim}
When random variables $X_A$, $X_B$ are independent conditioning on $Y$, 
\begin{align*}
    K(X_A=x_A,X_B=x_B)
    =&\sum_y {\Pr[Y=y]}K(X_A=x_A,Y=y) K(X_B=x_B,Y=y)\\
    =&\sum_y \Pr[Y=y|X_A=x_A] K(X_B=x_B,Y=y)\\
    =&\sum_y \frac{\Pr[Y=y|X_A=x_A]\Pr[Y=y|X_B=x_B]}{\Pr[Y=y]}.
\end{align*}
\end{claim}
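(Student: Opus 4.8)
The plan is to start from the definition of the pointwise mutual information
$K(X_A=x_A,X_B=x_B)=\frac{\Pr[X_A=x_A,X_B=x_B]}{\Pr[X_A=x_A]\Pr[X_B=x_B]}$ and expand the numerator by marginalizing over $Y$. First I would write $\Pr[X_A=x_A,X_B=x_B]=\sum_y \Pr[X_A=x_A,X_B=x_B\mid Y=y]\Pr[Y=y]$, and then invoke Assumption~\ref{assume:coni} (conditional independence) to factor $\Pr[X_A=x_A,X_B=x_B\mid Y=y]=\Pr[X_A=x_A\mid Y=y]\Pr[X_B=x_B\mid Y=y]$. Dividing through by $\Pr[X_A=x_A]\Pr[X_B=x_B]$ and regrouping the factors then yields
\[
K(X_A=x_A,X_B=x_B)=\sum_y \Pr[Y=y]\,\frac{\Pr[X_A=x_A\mid Y=y]}{\Pr[X_A=x_A]}\cdot\frac{\Pr[X_B=x_B\mid Y=y]}{\Pr[X_B=x_B]},
\]
which is exactly the first displayed equality once we recognize each fraction as $K(X_A=x_A,Y=y)$ and $K(X_B=x_B,Y=y)$ respectively, using the symmetric rewriting of $K$ recorded just before the claim.

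For the second and third equalities I would simply substitute the alternative forms of $K$ from the same chain of identities. For the second line, use $\Pr[Y=y]\,K(X_A=x_A,Y=y)=\Pr[Y=y]\cdot\frac{\Pr[Y=y\mid X_A=x_A]}{\Pr[Y=y]}=\Pr[Y=y\mid X_A=x_A]$, leaving $K(X_B=x_B,Y=y)$ untouched. For the third line, additionally replace $K(X_B=x_B,Y=y)$ by $\frac{\Pr[Y=y\mid X_B=x_B]}{\Pr[Y=y]}$.

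There is no real obstacle here: the statement is a direct computation, and the only thing to be careful about is bookkeeping of which of the three equivalent expressions for $K$ is being used at each step (joint/product-of-marginals, posterior-over-prior in $X_A$, posterior-over-prior in $X_B$). I would also note in passing that all denominators appearing are nonzero whenever the corresponding events have positive probability, so every conditional probability and ratio is well defined.
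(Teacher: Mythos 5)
Your proposal is correct and follows essentially the same route as the paper's proof: marginalize the joint over $Y$, apply the conditional independence assumption to factor it, divide by the marginals, and rewrite the resulting ratios using the posterior-over-prior form of the pointwise mutual information for the remaining equalities. Nothing further is needed.
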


\addtocounter{theorem}{-1}
}

\begin{proof}
    \begin{align*}
        K(X_A=x_A,X_B=x_B)=&\frac{\Pr[X_A=x_A,X_B=x_B]}{\Pr[X_A=x_A]\Pr[X_B=x_B]}\\
        =& \frac{\sum_y \Pr[Y=y]\Pr[X_A=x_A,X_B=x_B|Y=y]}{\Pr[X_A=x_A]\Pr[X_B=x_B]}\\ \tag{Conditional independence}
        =& \frac{\sum_y \Pr[Y=y]\Pr[X_A=x_A|Y=y]\Pr[X_B=x_B|Y=y]}{\Pr[X_A=x_A]\Pr[X_B=x_B]}\\ \tag{PMI=posterior/prior}
        =& \sum_y {\Pr[Y=y]}K(X_A=x_A,Y=y) K(X_B=x_B,Y=y)\\
        =&\sum_y \Pr[Y=y|X_A=x_A] K(X_B=x_B,Y=y)\\
    =&\sum_y \frac{\Pr[Y=y|X_A=x_A]\Pr[Y=y|X_B=x_B]}{\Pr[Y=y]}.
    \end{align*}
\end{proof}

{
\renewcommand{\thetheorem}{\ref{thm:focal}}

\begin{theorem}
Given the prior distribution over the $Y$, with the conditional independence assumption, with a priori similar and random order assumption, when $\max\{|\mathcal{L}_A|,|\mathcal{L}_B|\}\geq 2$ and the prior is stable and well-defined, when the convex function $f$ is differentiable and $f'$ is invertible, $MCG(f)$ is focal.

When both Alice and Bob are honest, each of them's expected payment in $MCG(f)$ is
$$ MI^f(X_A;X_B). $$
\end{theorem}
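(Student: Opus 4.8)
The plan is to follow the proof of Theorem~\ref{thm:ppl} almost verbatim, replacing the co-training objective by the expected per-agent payment of $MCG(f)$ and then reading off the three components of \emph{focal} --- strict truthfulness, weak optimality of truth-telling, and strict optimality against non-permutation profiles --- from the characterization of the maximizers, exactly as Theorem~\ref{thm:ppl} did via Lemma~\ref{lem:pplearn}. The non-negativity of $MI^f$ will handle individual rationality for free.

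\textbf{Step 1: reduce the payment to the dual $f$-mutual-information objective.} First I would use the a priori similar and random order assumption to argue that any (possibly mixed) strategy of Alice acts on the mechanism as an index-independent random predictor $\tilde P_A:\Sigma_A\mapsto\Delta_{\Sigma}$, and likewise $\tilde P_B$ for Bob. Since $\max\{|\mathcal{L}_A|,|\mathcal{L}_B|\}\ge 2$ (with $\mathcal{L}_A\cap\mathcal{L}_B\neq\emptyset$) guarantees both a common task and a pair of distinct tasks, and since for a common task $(x_A^{\ell},x_B^{\ell})$ is a draw from $U_{X_A,X_B}$ while for distinct tasks $(x_A^{\ell_A},x_B^{\ell_B})$ is a draw from $V_{X_A,X_B}$, the (symmetric) expected payment to each agent equals
$$\E_{U_{X_A,X_B}} R\big(\tilde P_A(X_A),\tilde P_B(X_B)\big)-\E_{V_{X_A,X_B}} f^{\star}\big(R(\tilde P_A(X_A),\tilde P_B(X_B))\big).$$
By Lemma~\ref{lemma:duality} this is at most $MI^f(X_A;X_B)$, with equality iff $R(\tilde P_A(x_A),\tilde P_B(x_B))=f'(K(x_A,x_B))$, equivalently (as $f'$ is invertible) $\sum_y \tilde P_A(x_A)(y)\tilde P_B(x_B)(y)/\Pr[Y=y]=K(x_A,x_B)$, for every $(x_A,x_B)$ in the support --- this is precisely Lemma~\ref{lem:focal}.

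\textbf{Steps 2--3: truth-telling is optimal, strictly so against non-permutation profiles.} When both agents are truthful, $\tilde P_A(x_A)=(\Pr[Y=y\mid X_A=x_A])_y$ and similarly for $\tilde P_B$, so Claim~\ref{claim:ci} gives $\sum_y \tilde P_A(x_A)(y)\tilde P_B(x_B)(y)/\Pr[Y=y]=K(x_A,x_B)$; hence each agent's expected payment is $MI^f(X_A;X_B)$, which by Step~1 is the maximum over all profiles. Conversely, if a profile attains $MI^f(X_A;X_B)$, then by Step~1 (reducing the mixed case to pure predictors via affineness of the objective in the mixing weights) its induced predictors satisfy the equations of system~(\ref{soe}) with $\mathbf{r}=\Pr[Y]$, of which the truthful profile is also a solution; well-definedness then produces a permutation $\pi$ with $\tilde P_A(x_A)=\pi\,\mathbf{p}_{x_A}$ and $\tilde P_B(x_B)=\pi\,\mathbf{p}_{x_B}$ for all $x_A,x_B$, i.e. the profile is a permutation strategy profile. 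Contrapositively, every non-permutation profile pays each agent strictly less than $MI^f(X_A;X_B)$.

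\textbf{Step 4: strict equilibrium, and the main obstacle.} Fix Bob truthful and suppose Alice's induced predictor $\tilde P_A$ is not the Bayesian posterior predictor. By Step~1 her expected payment is $\E_{U} R(\tilde P_A(X_A),\mathbf{p}_{X_B})-\E_{V} f^{\star}(R(\tilde P_A(X_A),\mathbf{p}_{X_B}))\le MI^f(X_A;X_B)$, with equality only if $\sum_y \tilde P_A(x_A)(y)\mathbf{p}_{x_B}(y)/\Pr[Y=y]=K(x_A,x_B)$ for all $x_A,x_B$; but then, holding $b^{x_B}_y=\Pr[Y=y\mid X_B=x_B]$ and $r_y=\Pr[Y=y]$ at their true values, $\{\tilde P_A(x_A)\}$ solves the remaining equations of~(\ref{soe}), so stability forces $\tilde P_A$ to be the Bayesian posterior predictor --- a contradiction. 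Hence the deviation is strictly worse; by symmetry the same holds for Bob, so truth-telling is a strict equilibrium and, with Steps~2--3, $MCG(f)$ is focal with truthful payment $MI^f(X_A;X_B)$. The hard part will be Step~1: carefully justifying, from the a priori similar and random order assumption, that strategies collapse to index-independent predictors and that the same-task and distinct-task empirical averages have expectations exactly $\E_{U}R$ and $\E_{V}f^{\star}(R)$ (pinning down the normalization constants and the small-case bookkeeping forced by $\max\{|\mathcal{L}_A|,|\mathcal{L}_B|\}\ge 2$), together with the affineness argument that reduces mixed strategies to mixtures of pure predictors. Everything afterward is a direct application of Lemma~\ref{lemma:duality}, Claim~\ref{claim:ci}, well-definedness, and stability, in exact parallel with Theorem~\ref{thm:ppl}.
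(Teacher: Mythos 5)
Your proposal is correct and follows essentially the same route as the paper: the a priori similar and random order assumption collapses strategies to index-independent (random) predictors so that the per-agent payment becomes the dual $f$-mutual-information objective (the paper's Lemma~\ref{lem:focal}, proved exactly as Lemma~\ref{lem:pplearn}), stability yields strict truthfulness against a truthful opponent, and well-definedness shows every payment-maximizing profile is a permutation profile, giving focality with truthful payment $MI^f(X_A;X_B)$. The only difference is one of emphasis --- you flag the mixed-strategy/bookkeeping reduction in Step~1 more explicitly than the paper, which simply asserts it --- but the substance is identical.
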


\addtocounter{theorem}{-1}
}

\begin{proof}
Given that Alice's strategy is $s_A$ and Bob's strategy is $s_B$, with the a priori similar and random order assumption, we represent agents' report as the output (possibly being random) of their strategy operating on the private information.

We start to show $MCG(f)$ is strictly truthful. Given that Alice is honest, based on Lemma~\ref{lem:focal}, Bob will maximize his expected payment if and only if $\forall \ell_1,\ell_2$,

\begin{align*}
    R({\mathbf{p}}_{{x_A}^{\ell_1}},s_B(x_B^{\ell_2}))=f'(K(x_A^{\ell_1},x_B^{\ell_2})).
\end{align*}

Note that in $MCG(f)$, 

\begin{align*}
    R({\mathbf{p}}_{{x_A}^{\ell_1}},s_B(x_B^{\ell_2}))=f'(\sum_y \frac{\mathbf{p}_{{x_A}^{\ell_1}}(y)s_B(x_B^{\ell_2})(y)}{\Pr[Y=y]})
\end{align*}

Since the prior is stable, the above equation is satisfied for all possible ${x_A}^{\ell_1}$ if and only if Bob tells the truth, i.e., reporting ${\mathbf{p}}_{{x_B}^{\ell_2}}$. Therefore, $MCG(f)$ is strictly truthful. 

It remains to show $MCG(f)$ pays truth-telling the most and strictly better than any other non-permutation strategy profile. When agents maximize the expected payment, 
\begin{align*}
    R(s_A({x_A}^{\ell_1}),s_B(x_B^{\ell_2}))=f'(K(x_A^{\ell_1},x_B^{\ell_2})).
\end{align*}

Recall that we defined 
\begin{align*}
    R(s_A({x_A}^{\ell_1}),s_B(x_B^{\ell_2}))=f'(\sum_y \frac{s_A({x_A}^{\ell_1})(y)s_B(x_B^{\ell_2})(y)}{\Pr[Y=y]}).
\end{align*}

Thus, when $f'$ is invertible, we have 

\begin{align*}
    \sum_y \frac{s_A({x_A}^{\ell_1})(y)s_B(x_B^{\ell_2})(y)}{\Pr[Y=y]}=K(x_A^{\ell_1},x_B^{\ell_2})
\end{align*} for any $x_A^{\ell_1},x_B^{\ell_2}$. This is exactly system (\ref{soe}).

With the conditional independence assumption, when agents tell the truth, the above system will be satisfied. Therefore, agents can maximize their expected payment via truth-telling. The non-negativity of $MI^f$ implies that agents are willing to participate in the mechanism. 

Moreover, when the prior is well-defined, if the prior $\Pr[Y]$ is a uniform distribution, then any permutation strategy profile can solve the above system and as well as maximize agents' expected payment. Even if the prior $\Pr[Y]$ is not a uniform distribution, although not all permutation strategy profiles solve the above system, still any solution of the above system must correspond to a permutation strategy profile, given the prior is well-defined. Therefore, when agents maximize their expected payment, their strategy profile must be a permutation strategy profile or truth-telling, which implies $MCG(f)$ is focal.

\end{proof}

{
\renewcommand{\thetheorem}{\ref{lem:focal}}

\begin{lemma}
With the conditional independence assumption, the expected total payment is maximized over Alice and Bob's strategies if and only if $\forall \ell_1 \in \mathcal{L}_A, \ell_2 \in \mathcal{L}_B$, for any $(x_A^{\ell_1},x_B^{\ell_2})\in\Sigma_A\times\Sigma_B$, $$R(\hat{\mathbf{p}}_{{x_A}^{\ell_1}}^{\ell_1},\hat{\mathbf{p}}_{{x_B}^{\ell_2}}^{\ell_2})=f'(K(x_A^{\ell_1},x_B^{\ell_2})).$$ The maximum is $$MI^f(X_A;X_B).$$
\end{lemma}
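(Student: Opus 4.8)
The plan is to follow the proof of Lemma~\ref{lem:pplearn}, with the extra care needed because here the ``samples'' are the (possibly randomized) reports produced by the agents' strategies rather than fixed data. Fix a strategy profile $(s_A,s_B)$. By the a priori similar and random order assumption the tasks are i.i.d., so every term $R(\hat{\mathbf p}_{{x_A}^{\ell}}^{\ell},\hat{\mathbf p}_{{x_B}^{\ell}}^{\ell})$ of the reward sum ($\ell\in\mathcal L_A\cap\mathcal L_B$) has the same law: $(x_A^{\ell},x_B^{\ell})$ is a fresh draw from $\mathbf U_{X_A,X_B}$ and the reports are $s_A(x_A^{\ell})$, $s_B(x_B^{\ell})$ produced with independent strategy randomness. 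Likewise every term $f^{\star}(R(\hat{\mathbf p}_{{x_A}^{\ell_A}}^{\ell_A},\hat{\mathbf p}_{{x_B}^{\ell_B}}^{\ell_B}))$ of the punishment sum ($\ell_A\in\mathcal L_A$, $\ell_B\in\mathcal L_B$, $\ell_A\neq\ell_B$) has the same law: since two distinct tasks are independent, $(x_A^{\ell_A},x_B^{\ell_B})$ is distributed as $\mathbf V_{X_A,X_B}$, the product of the marginals. Hence, as in the proof of Lemma~\ref{lem:pplearn}, the expected payment (the same for Alice and Bob) equals $\E_{\mathbf U_{X_A,X_B}}\!\big[R(s_A(X_A),s_B(X_B))\big]-\E_{\mathbf V_{X_A,X_B}}\!\big[f^{\star}(R(s_A(X_A'),s_B(X_B')))\big]$, where the two expectations carry independent copies of the strategy randomness.

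To apply Lemma~\ref{lemma:duality} I would condition on that randomness. Write $\omega_A$ (resp. $\omega_B$) for the randomness Alice (resp. Bob) uses to turn one signal into a report; these are independent, and for a fixed realization $\omega_A=a$, $\omega_B=b$ the map $u_{a,b}(x_A,x_B):=R(s_A(x_A;a),s_B(x_B;b))$ is a deterministic real-valued function on $\Sigma_A\times\Sigma_B$. Coupling the two expectations above through the same $(a,b)$, the expected payment is $\E_{a,b}\big[\E_{\mathbf U_{X_A,X_B}}u_{a,b}-\E_{\mathbf V_{X_A,X_B}}f^{\star}(u_{a,b})\big]$, and Lemma~\ref{lemma:duality} gives $\E_{\mathbf U_{X_A,X_B}}u_{a,b}-\E_{\mathbf V_{X_A,X_B}}f^{\star}(u_{a,b})\le MI^f(X_A;X_B)$ for every $(a,b)$, so the expected payment is at most $MI^f(X_A;X_B)$. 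This bound is attained: under truth-telling $u(x_A,x_B)=f'\!\big(\sum_y \mathbf p_{x_A}(y)\mathbf p_{x_B}(y)/\Pr[Y=y]\big)=f'(K(x_A,x_B))$ by Claim~\ref{claim:ci}, which is precisely the optimal distinguisher $u^*$ in the equality clause of Lemma~\ref{lemma:duality}, so the maximum equals $MI^f(X_A;X_B)$.

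For the ``only if'' direction, if a profile attains $MI^f(X_A;X_B)$ then, since every $u_{a,b}$ satisfies $\E_{\mathbf U_{X_A,X_B}}u_{a,b}-\E_{\mathbf V_{X_A,X_B}}f^{\star}(u_{a,b})\le MI^f(X_A;X_B)$ and their average equals $MI^f(X_A;X_B)$, equality must hold for almost every $(a,b)$; the equality clause of Lemma~\ref{lemma:duality} then forces $u_{a,b}(x_A,x_B)\in\partial f(K(x_A,x_B))$ for all $x_A,x_B$, and since $f$ is differentiable this subdifferential is the singleton $\{f'(K(x_A,x_B))\}$, i.e. $R(\hat{\mathbf p}_{{x_A}^{\ell_1}}^{\ell_1},\hat{\mathbf p}_{{x_B}^{\ell_2}}^{\ell_2})=f'(K(x_A^{\ell_1},x_B^{\ell_2}))$ for every realized report pair and all $\ell_1\in\mathcal L_A$, $\ell_2\in\mathcal L_B$. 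Conversely, any profile meeting this identity makes each $u_{a,b}$ equal to $u^*$ and hence attains the maximum. The main obstacle is the bookkeeping of the first two steps: arguing that distinct-task report pairs are really distributed as the product of marginals (this is exactly where the a priori similar / random order assumption and the per-signal notion of a strategy are used), and handling randomized strategies by conditioning, since $f^{\star}$ is nonlinear and the strategy-expectation cannot simply be pushed inside it. The hypothesis $\max\{|\mathcal L_A|,|\mathcal L_B|\}\ge 2$ plays no role beyond guaranteeing that a pair of distinct tasks exists, so that the punishment sum is non-vacuous.
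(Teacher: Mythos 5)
Your proof is correct and follows essentially the same route as the paper: reduce the expected payment to $\E_{\mathbf U_{X_A,X_B}}R-\E_{\mathbf V_{X_A,X_B}}f^{\star}(R)$ using the a priori similar / random order assumption (so same-task pairs follow the joint distribution and distinct-task pairs the product of marginals) and then invoke the duality lemma, exactly as the paper does by viewing a strategy as a random predictor and reusing the proof of Lemma~\ref{lem:pplearn}. Your conditioning on the agents' strategy randomness and the averaging argument for the ``only if'' direction merely spell out details the paper leaves implicit, so this is the same proof at a finer level of rigor.
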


\addtocounter{theorem}{-1}
}

\begin{proof}
    Without loss of generality, it is sufficient to analyze Alice's strategy and report. With the a priori similar and random order assumption, $\hat{\mathbf{p}}_{{x_A}^{\ell_1}}^{\ell_1}$ can be represented as $s_A({x_A}^{\ell_1})$ since the index of the task $\ell_1$ is meaningless to Alice when all tasks appear in a random order, independently drawn for each agent.  The strategy can be seen as a random predictor. Thus,  we can use the same proof of Lemma~\ref{lem:pplearn} to prove Lemma~\ref{lem:focal}.
\end{proof}


\end{document}